\numberwithin{equation}{section}
\providecommand{\keywords}[1]{\textbf{\textit{Keywords---}} #1}
\newcommand{\norm}[1]{\left\lVert#1\right\rVert}
\newtheorem{theorem}{Theorem}[section]
\newtheorem{corollary}[theorem]{Corollary}
\newtheorem{lemma}[theorem]{Lemma}
\newtheorem{definition}[theorem]{Definition}
\newtheorem{assumption}[theorem]{Assumption}
\newtheorem{remark}[theorem]{Remark}
\newcommand{\M}{\mathcal{M}}
\newcommand{\U}{\mathcal{U}}
\newcommand{\cM}{\mathcal{M}}
\newcommand{\etal}{ et al. }
\newcommand{\argmin}{\mathop{\rm argmin}}
\newcommand{\RR}{\mathbb{R}}
\newcommand{\BB}{\mathbb{B}}
\newcommand{\PP}{\mathbb{P}}
\newcommand{\be}{\begin{equation}}
\newcommand{\ee}{\end{equation}}
\newcommand{\ba}{\begin{array}}
\newcommand{\ea}{\end{array}}
\newcommand{\bad}{\begin{aligned}}
\newcommand{\ead}{\end{aligned}}
\newcommand{\prox}{\mathrm{prox}}
\newcommand{\ufun}{\mathscr{F}}
\newcommand{\uspace}{\mathscr{S}}
\newcommand{\utime}{\mathscr{T}}
\begin{document}

\title{Escaping Saddle Points for Nonsmooth Weakly Convex Functions via Perturbed Proximal Algorithms}

\author{Minhui Huang \thanks{Department of Electrical and Computer Engineering, University of California, Davis} \and Weiming Zhu\thanks{Simon Business School, University of Rochester}}
\date{\today}
\maketitle

\begin{abstract}
We propose perturbed proximal algorithms that can provably escape strict saddles for nonsmooth weakly convex functions. The main results are based on a novel characterization of $\epsilon$-approximate local minimum for nonsmooth functions, and recent developments on perturbed gradient methods for escaping saddle points for smooth problems. Specifically, we show that under standard assumptions, the perturbed proximal point, perturbed proximal gradient and perturbed proximal linear algorithms find $\epsilon$-approximate local minimum for nonsmooth weakly convex functions in $O(\epsilon^{-2}\log(d)^4)$ iterations, where $d$ is the dimension of the problem.
\end{abstract}

\keywords{Nonsmooth Optimization, Saddle Point, Perturbed Proximal Algorithms}

\section{Introduction}

Nonconvex optimization plays an important role in deep learning and machine learning. Although global optimum for nonconvex optimization is not easy to obtain in general, recent studies showed that for many smooth optimization problems arising from important applications, second-order stationary points are indeed global optimum under mild conditions. 
We refer to \cite{ge2015escaping,sun2016complete,sun2018geometric,bandeira2016low, mei2017solving,boumal2016non,bhojanapalli2016global,ge2016matrix,ge2017no} for a partial list of these works.
There have been many research works on this topic when the objective function is smooth. As a result, in order to find the global optimum for these problems, one only needs to avoid the saddle points. {Besides these highly structured optimization problems, avoiding saddle points is also crucial for general optimization problems. For instance, in training of neural network, recent results show that the proliferation of saddle points undermines the quality of solutions, which lead to much higher error than local minimum \cite{dauphin2014identifying, choromanska2015loss}. Neural network saddle points appear even more frequently and are potentially a major bottleneck in high dimensional problems of practical interest \cite{dauphin2014identifying}. On the other hand, nonsmooth formulations frequently appear in machine learning and modern signal processing problems. Some well studied examples include neural networks with ReLU activation \cite{li2017convergence}, robust principal component analysis \cite{yi2016fast, gu2016low}, robust matrix completion \cite{li2020nonconvex, cambier2016robust}, blind deconvolution \cite{lau2019short, zhang2017global}, robust phase retrieval \cite{duchi2019solving, davis2017nonsmooth}. It is known that neural networks with the nonsmooth ReLU activation has great expressive power \cite{li2017convergence}. Nonsmooth regularizers also provide robustness against sparse outliers for data science applications.} In this paper, we consider how to escape saddle points for nonsmooth functions, which is still less developed so far.

Most existing works for {finding local minimum} consider smooth objective functions {and derive the complexity for reaching an $\epsilon$-second order stationary point (See Definition \ref{defn:nesterov-eps-2nd-sp}). Nesterov and Polyak \cite{nesterov2006cubic} proposed the cubic regularization method, which requires $O(\epsilon^{-3/2})$ iterations for obtaining an $\epsilon$-second order stationary point. Curtis \etal \cite{curtis2017trust} proved the same complexity result for the trust region method. To avoid Hessian computation required in \cite{nesterov2006cubic,curtis2017trust}, Carmon \etal \cite{carmon2018accelerated} and Agarwal \etal \cite{agarwal2017finding} proposed to use Hessian-vector product and achieved convergence rate of $O(\epsilon^{-7/4})$. Recently, the complexity results of pure first-order methods for obtaining local minimum have been studied (see, e.g., \cite{ge2015escaping,daneshmand2018escaping,jin2019nonconvex,fang2019sharp}). 
Lee \etal \cite{lee2016gradient} proved that gradient descent method converges to a local minimizer almost surely with random initialization by using tools from dynamical systems theory. However, Du \etal \cite{du2017gradient} showed that gradient descent (GD) method may take exponential time to escape saddle points. Recently, Jin \etal \cite{jin2017escape,jin2019nonconvex} proved that the perturbed GD can converge to a local minimizer in a number of iterations that depends poly-logarithmically on the dimension, {reaching an iteration complexity of $\tilde{O}(\epsilon^{-2}\log (d)^4)$}. Jin \etal \cite{jin2019nonconvex} also proposed perturbed stochastic gradient descent method (SGD) that requires $\tilde{O}(\epsilon^{-4})$ stochastic gradient computations to reach an $\epsilon$-second order stationary point. {By utilizing the techniques developed by Carman \etal \cite{carmon2017convex}, Jin \etal \cite{jin2017accelerated} proposed the accelerated perturbed GD and improved the complexity to $\tilde{O}(\epsilon^{-7/4}\log (d)^6)$.}
More recently, Fang \etal \cite{fang2019sharp} proved that the complexity of SGD can be further improved to $\tilde{O}(\epsilon^{-3.5})$ under the assumption that the gradient and Hessian are both Lipschitz continuous. 
How to escape saddle points for constrained problems are also studied in the literature. Avdiukhin \etal \cite{avdiukhinescaping} proposed a noisy projected gradient descent method for escaping saddle points for problems with linear inequality constraints. {Mokhtari \etal \cite{mokhtari2018escaping} studied a generic algorithm for escaping saddle points of smooth nonconvex optimization problems subject to a general convex set.} Criscitiello and Boumal \cite{criscitiello2019efficiently} and Sun \etal \cite{sun2019escaping} considered the perturbed Riemannian gradient method and showed that it can escape saddle points for smooth minimization over manifolds.

The literature on escaping saddle points for nonsmooth functions, on the other hand, is relatively limited. Among few existing works, Huang and Becker \cite{huang2019perturbed} proposed a perturbed proximal gradient method for nonconvex minimization with an $\ell_1$-regularizer and showed that their proposed method can escape the saddle points for this particular class of problems. It is not clear how to extend the results in \cite{huang2019perturbed} to more general nonsmooth functions. Davis and  Drusvyatskiy \cite{davis2019active} extended the work \cite{lee2016gradient} to nonsmooth problems, showing that proximal point, proximal gradient and proximal linear algorithms converge to local minimum almost surely, under the assumption that the objective function satisfies a strict saddle property. However, no convergence rate complexity {for finding a local minimum} was given in \cite{davis2019active}.

In this paper, we consider minimizing a nonsmooth weakly convex function. We propose perturbed proximal point, perturbed proximal gradient and perturbed proximal linear algorithms, and prove that they can escape active strict saddle points for nonsmooth weakly convex functions, under standard conditions. Our main results are based on a novel characterization on the $\epsilon$-approximate local minimum inspired by \cite{davis2019active}, and the analysis of perturbed gradient descent method for minimizing smooth functions \cite{jin2017escape, jin2019nonconvex}. Our {\bf main contributions} are summarized below.
\begin{enumerate}
\item We propose a novel definition of $\epsilon$-approximate local minimum for nonsmooth weakly convex problems.
\item We propose three perturbed proximal algorithms that provably escape active strict saddle points for nonsmooth weakly convex functions. The three algorithms are: perturbed proximal point algorithm, perturbed proximal gradient algorithm, and perturbed proximal linear algorithm.
\item We analyze the iteration complexity of the three proposed algorithms for obtaining an $\epsilon$-approximate local minimum for nonsmooth weakly convex functions. We show that the iteration complexity for the three algorithms to obtain an $\epsilon$-approximate local minimum is $O(\epsilon^{-2}\log (d)^4)$. To the best of our knowledge, this is the first quantitative iteration complexity of algorithms for finding an approximate local minimum for nonsmooth weakly convex functions.
\end{enumerate}

{\textbf{Notation:} Let $\norm{\cdot}$ denote the Euclidean norm of a vector. For a nonsmooth function $f(\cdot)$, we denote its subdifferential at point $x$ as $\partial f(x)$, and the parabolic subderivative (defined in the appendix) at $x$ for $u \in \text{dom }df(x)$ with respect to $w$ as $df^2(x)(u|w)$. When $df^2(x)(u|w)$ is a constant with respect to $w$, we omit $w$ and denote it as $df^2(x)(u)$.  We denote $\M$ as a manifold and $\U$ as a neighborhood set of some points. The tangent space of a manifold $\M$ at $x$ is denoted as $T_\M(x)$. We further denote a restriction $f_\M(x):= f + \delta_\M$, where $\delta_\M$ evaluates to $0$ on $\M$ and $+\infty$ off it. We denote the operator of three proximal algorithms as $S(\cdot)$ and its Jacobian as $\nabla S(\cdot)$. A ball centered at $x$ with radius $r$ is denoted as $\mathbb{B}_x(r)$. We use big-$O$ notation, where $p = O(q)$ if there exists a global constant $c$ such that $|p| \le cq$ and $p = \tilde{O}(q)$ that hides a poly-logarithmic factor of $d$ and $\epsilon$. We use $\lambda_{\min}(Z)$  and $\lambda_{\max}(Z)$ to denote the smallest and largest eigenvalues of matrix $Z$, respectively.
}

\section{Perturbed Proximal Algorithms}\label{sec:prox-algs}
We consider three types of nonsmooth and weakly convex optimization problems and their corresponding solution algorithms: proximal point algorithm (PPA), proximal gradient method (PGM), and proximal linear method (PLM). Specifically, PPA solves
\be\label{ppa-prob}\min_{x\in \RR^d} f(x), \ee
and it iterates as
\be\label{ppa-alg} x_{t+1} := S(x_t), \quad \mbox{where} \quad S(x):= \prox_{\eta f}(x).\ee
Here we assume that $f: \RR^d \to \RR\cup\{\infty\} $ is a nonsmooth $\ell$-weakly convex function. 
Recall that a function $f(x)$ is $\ell$-weakly convex if $f(x) + \frac{\ell}{2}\norm{x}^2$ is convex. 
When choosing {$\lambda \in (0, \ell^{-1})$}, we define the Moreau envelope of function $f(\cdot)$, denoted as {$f_\lambda(\cdot)$}, and its corresponding proximal mapping as {
\[
f_\lambda(x) = \min_{y\in \RR^d} f(y) +  \frac{1}{2\lambda}\norm{y-x}^2, \quad\text{and} \quad \prox_{\lambda f}(x) = \argmin_{y\in \RR^d} f(y) +  \frac{1}{2\lambda}\norm{y-x}^2.
\]
}
{Note that the step size $\eta$ in \eqref{ppa-alg} and the parameter $\lambda$ can be different.}

PGM solves
\be\label{pgm-prob} \min_{x\in \RR^d} f(x) \equiv g(x) + {m}(x),\ee
and it iterates as
\be\label{pgm-alg} x_{t+1} := S(x_t), \quad \mbox{where} \quad S(x) := \prox_{\eta {m}}(x - \eta\nabla g(x)).\ee
Here we assume $g: \RR^d \to \RR\cup\{\infty\}$ is $C^2$-smooth with $\beta$-Lipschitz gradient and ${m}: \RR^d \to \RR\cup\{\infty\}$ is closed and $\mu$-weakly convex.

PLM solves
\be\label{plm-prob} \min_{x\in \RR^d} f(x) \equiv h(F(x)) + {m}(x),\ee
and it iterates as
\be\label{plm-alg} x_{t+1} := S(x_t), \quad \mbox{where} \quad S(x):= \argmin_{y\in \RR^d} h(F(x) + \nabla F(x)(y-x)) + {m}(y) + \frac{1}{2\eta}\norm{y-x}^2.\ee
Here we assume $F: \RR^d \to \RR^m$ is a $C^2$-smooth map, $h:\RR^d \to \RR\cup\{\infty\}$ is nonsmooth convex, and ${m}: \RR^d \to \RR\cup\{\infty\}$ is $\mu$-weakly convex.

Now we present a unified algorithmic framework of perturbed proximal algorithms that find $\epsilon$-approximate local minimum points (will be defined later) of \eqref{ppa-prob}, \eqref{pgm-prob} and \eqref{plm-prob}. Our algorithmic framework is presented in Algorithm \ref{algo:PPA} and it largely follows the perturbed gradient method for smooth problems \cite{jin2017escape,jin2019nonconvex}. Inspired by \cite{davis2019stochastic}, Algorithm \ref{algo:PPA} utilizes the Moreau envelope to measure the first order stationarity. {When $\norm{\nabla f_\lambda(x)}$ is close to zero, then $x$ is close to a first order stationary point. }Same as \cite{jin2017escape,jin2019nonconvex}, the perturbation added to $x_t$ helps escape the saddle points. In Algorithm \ref{algo:PPA}, $\xi\sim\text{Uniform}(\BB_0(r))$ means that $\xi$ is a random vector uniformly sampled from the Euclidean ball with radius $r$. When $S$ is chosen as in \eqref{ppa-alg}, \eqref{pgm-alg} and \eqref{plm-alg}, we name Algorithm \ref{algo:PPA} as {\bf perturbed PPA}, {\bf perturbed PGM} and {\bf perturbed PLM}, respectively.


\begin{algorithm}[H]
\caption{Perturbed Proximal Algorithms}\label{algo:PPA}
\begin{algorithmic}
\renewcommand{\algorithmicrequire}{\textbf{Input: }}
\renewcommand{\algorithmicensure}{\textbf{Output: }}
\REQUIRE $x_0$, step size $\eta$, perturbation radius $r$, time interval $\utime$, tolerance $\epsilon$.
\STATE $t_{\text{perturb}} = 0$
\FOR{$t = 0, 1, \ldots, T $}
\IF{$\norm{\nabla f_\lambda(x_t)} \le \epsilon$ and $t - t_{\text{perturb}} > \utime$}
\STATE $x_t \leftarrow x_t - \eta\xi_t, ~(\xi_t \sim \text{Uniform}(\BB_0(r))); \quad t_{\text{perturb}} \leftarrow t$
\ENDIF
\STATE $x_{t+1} \leftarrow S(x_t)$, where $S$ is one of the operators defined in \eqref{ppa-alg}, \eqref{pgm-alg} and \eqref{plm-alg}.
\ENDFOR
\end{algorithmic}
\end{algorithm}

\begin{remark}

Calculating $\norm{\nabla f_\lambda(x_t)}$ in each iteration can be time consuming in the worst case. However, it is investigated by Davis \etal \cite{davis2019stochastic} that $\norm{\nabla f_\lambda(x_t)}$ is proportional to more familiar quantities such as the gradient mapping: $\lambda^{-1}\norm{x_t - x_{t-1}}$. In practice, we can replace the condition $\norm{\nabla f_\lambda(x_t)} < \epsilon$ by $\lambda^{-1}\norm{x_t - x_{t-1}}\le \epsilon$, which is much easier to check.
\end{remark}


\section{Active Strict Saddles and $\epsilon$-Approximate Local Minimum}

We first introduce the concepts of active manifold and strict saddles, which play an important role in characterizing nonsmooth functions.
\begin{definition}[Active manifold \cite{davis2019active}]\label{defn:ident_man}
Consider a closed weakly convex function $f\colon\RR^d\to\RR\cup\{\infty\}$ and fix a set $\mathcal{M} \subseteq \RR^d$ containing a critical point $x$ of $f$. Then $\mathcal{M}$ is called {\em an active} $C^p${\em-manifold around} $x$ if there exists a neighborhood $\U$ around $x$ satisfying the following conditions.
\begin{enumerate}
\item {Smoothness.} The set $\mathcal{M}\cap \U$ is a $C^p$-smooth manifold and the restriction of $f$ to $\mathcal{M}\cap \U$ is $C^p$-smooth.
\item {Sharpness.} The lower bound holds:
\[\inf \{\|v\|: v\in \partial f(x),~x\in \U\setminus \cM\}>0.\]
\end{enumerate}
\end{definition}

\begin{definition}[Strict Saddles \cite{davis2019active}]\label{defn:strict_saddle}
Consider a weakly convex function $f: \RR^d \to \RR\cup\{\infty\}$. A critical point $x$ is a strict saddle of $f$ if there exists a $C^2$-active manifold $\M$ of $f$ at $x$ and the inequality $d^2f_\M(x)(u) < 0$ holds for some vector $u \in T_\M(x)$. A function $f$ is said to have the strict saddle property if each of its critical points is either a local minimizer or a strict saddle.
\end{definition}

We now provide some intuitions on the geometric structure of the nonsmooth optimization. The nonsmoothness of optimization problems typically arises in a highly structured way, usually along a smooth active manifold. Geometrically, an active manifold $\M$ is a set that contains all points such that the objective function is smooth along the manifold, but varies sharply off it. For example, Figure \ref{fig:function_sur} illustrates the surface of function $g(x,y) = \lvert x ^2 + y^2 - 1 \rvert + x$ and its active manifold is $\M = \{(x, y) | x ^2 + y^2 = 1\}$. There is a saddle point $(1, 0)$ and a global minimum $(-1, 0)$ of function $g$ and both of them lie on the active manifold $\M$. The active manifold idea is widely used in many problems, for instance, the point-wise maximum of smooth function problem and the maximum eigenvalue minimization problem \cite{lewis2002active}. In \cite{hare2007identifying}, the authors showed that some simple algorithms such as projected gradient and proximal point methods can identify active manifold, which means all iterates after a finite number of iterations must lie on the active manifold. In this paper, we assume all critical points of the objective function lie on the active manifold (otherwise it becomes a smooth problem in the neighborhood of the critical point) and the active manifold is $C^2$ smooth in the neighborhood of each critical point. 

\begin{figure}[h!]
  \centering
  \includegraphics[scale=0.18]{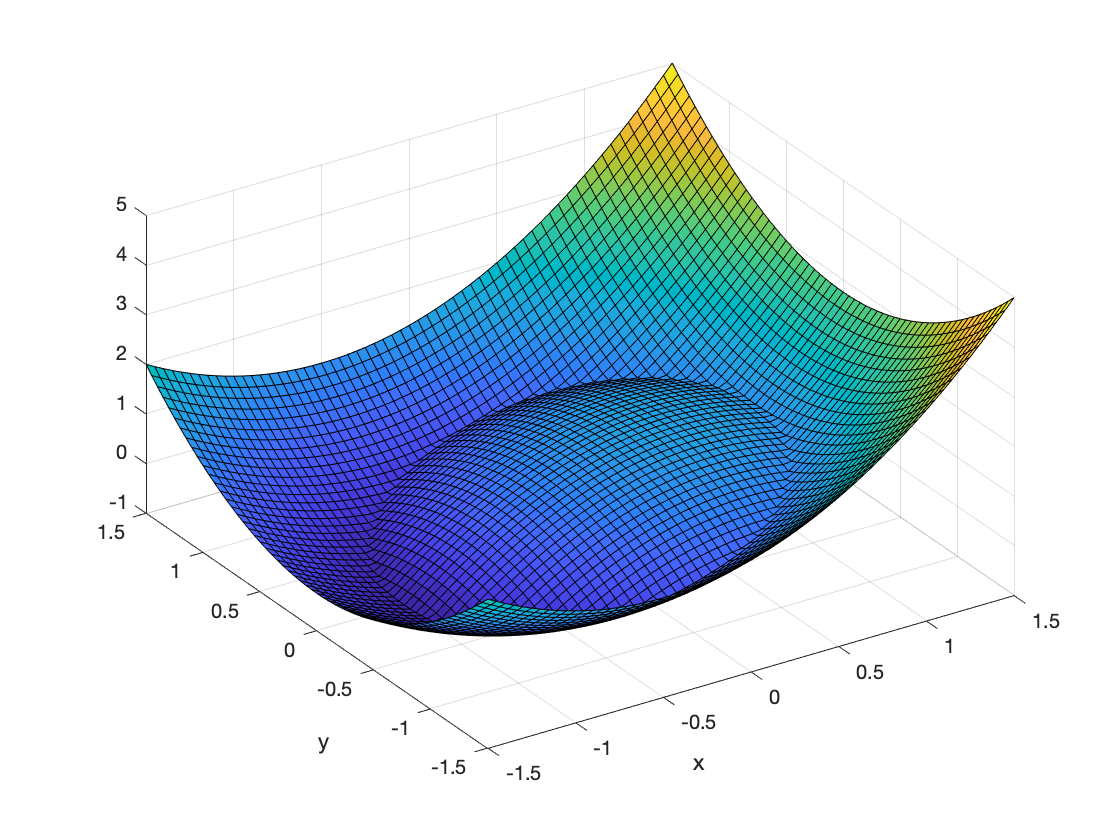}
  \caption{Surface of the function $g(x,y) = \lvert x ^2 + y^2 - 1 \rvert+ x.$}
  \label{fig:function_sur}
\end{figure}

Throughout this paper, we make the following assumption.
\begin{assumption}\label{assum:strict-saddle-property}
The function $f$ in \eqref{ppa-prob}, \eqref{pgm-prob} and \eqref{plm-prob} satisfies the strict saddle property.
\end{assumption}
From Assumption \ref{assum:strict-saddle-property} we know that any algorithms for solving \eqref{ppa-prob}, \eqref{pgm-prob} and \eqref{plm-prob} converge to a local minimum if they can escape the strict saddle points.

We now discuss how to define an approximate local minimum for \eqref{ppa-prob}, \eqref{pgm-prob} and \eqref{plm-prob}. For nonconvex smooth problems, Nesterov and Polyak \cite{nesterov2006cubic} proposed the following definition for an $\epsilon$-second-order stationary point.
\begin{definition}[$\epsilon$-second-order stationary point for smooth problem \cite{nesterov2006cubic}]\label{defn:nesterov-eps-2nd-sp}
Assume function $f\in C^2$, and the Hessian of $f$ is Lipchitz continuous with a Lipschitz constant $\rho$. A point $x$ is called an $\epsilon$-second-order stationary point to problem $\{\min_x f(x)\}$, if it satisfies $\norm{\nabla f(x)} \le \epsilon$ and $\lambda_{\min}(\nabla^2 f(x)) > -\sqrt{\rho \epsilon}.$ 
\end{definition}
This definition does not apply to our problems because we consider nonsmooth functions. In the following, we propose a novel definition of $\epsilon$-approximate local minimum for nonsmooth functions. Our definition is motivated by a recent result of Davis and Drusvyatskiy \cite{davis2019active}, which proved the following results regarding the proximal algorithms PPA \eqref{plm-alg}, PGM \eqref{pgm-alg} and PLM \eqref{plm-alg} discussed in Section \ref{sec:prox-algs}.

\begin{theorem}[Theorems 3.1, 4.1 and 5.3 in \cite{davis2019active}]\label{thm:Davis-3-theorems}
Assume the objective function $f$ in problems \eqref{ppa-prob}, \eqref{pgm-prob} and \eqref{plm-prob} is an $\ell$-weakly convex and $\bar{x}$ is a critical point of $f$. For parameters $\lambda \in (0, \ell^{-1})$ and sufficiently small $\eta>0$, the following statements hold.
\begin{enumerate}
\item Consider problem \eqref{ppa-prob}. Suppose that $f$ admits a $C^2$ active manifold $\M$ at $\bar{x}$. Then the proximal map $S(\cdot):=\prox_{\eta f}(\cdot)$ defined in \eqref{ppa-alg} is $C^1$-smooth on a neighborhood of $\bar{x}$. Moreover, if $\bar{x}$ is a strict saddle point of $f$, then $\bar{x}$ is both a strict saddle point of $f_\lambda$ and an unstable fixed point of the proximal map $S$. Moreover, $\nabla S(\bar{x})$ has a real eigenvalue that is strictly greater than one.
\item Consider problem \eqref{pgm-prob}. Suppose that $f$ admits a $C^2$ active manifold $\M$ at $\bar{x}$. Then the proximal-gradient map $S$ defined in \eqref{pgm-alg} is $C^1$-smooth on a neighborhood of $\bar{x}$. Moreover, if $\bar{x}$ is a strict saddle point of $f$, then $\nabla S(\bar{x})$ has a real eigenvalue that is strictly greater than one.
\item Consider problem \eqref{plm-prob}. Suppose the problem admits a composite $C^2$ active manifold $\M$ \cite[Definition 5.1]{davis2019active} at $\bar{x}$. Then the proximal linear map $S$ defined in \eqref{plm-alg} is $C^1$-smooth on a neighborhood of $\bar{x}$. Moreover, if $\bar{x}$ is a composite strict saddle point \cite[Definition 5.1]{davis2019active}, then the Jacobian $\nabla S(\bar{x})$ has a real eigenvalue strictly greater than one.
\end{enumerate}
\end{theorem}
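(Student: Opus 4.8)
All three statements share the same logic: confine the nonsmooth operator $S$ to the active manifold $\M$, reduce its analysis to a smooth computation on $\M$, and then read off the spectrum of $\nabla S(\bar x)$ from the second-order behavior of the restricted function $f_\M$. The first step is a manifold-identification argument. The sharpness condition in Definition~\ref{defn:ident_man} forces the proximal-type map to take values in $\M$ once its argument is near $\bar x$: for PPA one shows $\prox_{\eta f}(x)\in\M$ for all $x$ in a neighborhood of $\bar x$, using that off $\M$ the subgradients of $f$ are bounded away from zero while the quadratic $\tfrac{1}{2\eta}\norm{\cdot-x}^2$ is a small perturbation for small $\eta$; the same reasoning applies to $\prox_{\eta m}(x-\eta\nabla g(x))$ for PGM and to the proximal-linear subproblem for PLM, the latter invoking the composite active manifold of \cite[Definition 5.1]{davis2019active}. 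Since $\bar x$ is critical, $0\in\partial f(\bar x)$ yields $S(\bar x)=\bar x$, so $\bar x$ is a fixed point; for PPA this also gives $\nabla f_\lambda(\bar x)=\lambda^{-1}(\bar x-\prox_{\lambda f}(\bar x))=0$.

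Next I would establish $C^1$-smoothness. Having confined $S$ to $\M$, parametrize a neighborhood of $\bar x$ in $\M$ by a $C^2$ chart and write the first-order optimality conditions of the subproblem defining $S(x)$ in tangent/normal coordinates relative to $\M$. Weak convexity of $f$ (respectively $m$, respectively the composite objective), together with a sufficiently small step $\eta$, makes the proximal quadratic dominate, so the Hessian of the subproblem objective at its solution is positive definite; the implicit function theorem then gives $S\in C^1$ near $\bar x$. For PPA one may instead cite prox-regularity (Poliquin--Rockafellar), which yields that $f_\lambda$ is $C^2$ near $\bar x$ and hence $\prox_{\lambda f}=\Id-\lambda\nabla f_\lambda$ is $C^1$.

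The heart of the argument is the spectrum of $\nabla S(\bar x)$. Decompose $\RR^d=T_\M(\bar x)\oplus N_\M(\bar x)$. In the normal directions the operator is strongly contractive: sharpness and the proximal penalty pull points back onto $\M$, so the normal block of $\nabla S(\bar x)$ has spectral radius strictly below one for small $\eta$. In the tangent directions, $S$ restricted to $\M$ is exactly a proximal/gradient-type step for the smooth function $f_\M$, so the tangent block of $\nabla S(\bar x)$ equals $I-c\,\mathrm{Hess}\,f_\M(\bar x)$ modulo a lower-order coupling with the normal block, where $c>0$ depends on $\eta$ (and on $\lambda$ for PPA) and $\mathrm{Hess}\,f_\M(\bar x)$ is the quadratic form $u\mapsto d^2f_\M(\bar x)(u)$ on $T_\M(\bar x)$. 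Since $\bar x$ is a (composite) strict saddle, there is $u\in T_\M(\bar x)$ with $d^2f_\M(\bar x)(u)<0$, so $\mathrm{Hess}\,f_\M(\bar x)$ has a negative eigenvalue; hence $I-c\,\mathrm{Hess}\,f_\M(\bar x)$ has a real eigenvalue strictly greater than one, and a block-triangular perturbation argument transfers this to $\nabla S(\bar x)$. For PPA, the same negative eigenvalue of $\nabla^2 f_\lambda(\bar x)=\lambda^{-1}(I-\nabla S(\bar x))$ shows $\bar x$ is a strict saddle of $f_\lambda$, and the eigenvalue $>1$ of $\nabla S(\bar x)$ makes $\bar x$ an unstable fixed point of $S$.

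The main obstacle is the last step: proving that the normal block is \emph{strictly} contractive and controlling the tangent/normal coupling tightly enough that it cannot perturb away the eigenvalue larger than one. This requires careful bookkeeping of the second-order geometry near $\bar x$ — the parabolic subderivative $d^2f_\M$, the curvature (second fundamental form) of $\M$, and the weak-convexity modulus — and the identification-plus-implicit-function-theorem computation must be carried out separately for each of the three operators, with PLM additionally requiring the composite active-manifold calculus to push the partial smoothness of the convex outer function $h$ through the nonlinear map $F$.
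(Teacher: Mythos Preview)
The paper does not prove this theorem at all: it is stated purely as a citation of Theorems~3.1, 4.1 and 5.3 of \cite{davis2019active}, and no proof or sketch appears anywhere in the text or appendix. So there is no ``paper's own proof'' to compare against; the statement functions here as imported background.

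That said, your outline is a faithful reconstruction of the strategy actually used in \cite{davis2019active}: manifold identification (the proximal-type map lands on $\M$ near $\bar x$ by sharpness), $C^1$-smoothness via the implicit function theorem (or, for PPA, via prox-regularity and $C^2$-smoothness of the Moreau envelope), and a tangent/normal block analysis of $\nabla S(\bar x)$ in which the tangent block reads as $I - c\,\mathrm{Hess}\,f_\M(\bar x)$ so that a negative eigenvalue of $\mathrm{Hess}\,f_\M(\bar x)$ forces a real eigenvalue of $\nabla S(\bar x)$ strictly larger than one. Your identified obstacle---controlling the normal contraction and the tangent/normal coupling so the eigenvalue $>1$ survives---is exactly where the work lies in the original reference, and for PLM the composite active-manifold calculus is indeed the extra ingredient. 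If you intend to supply a self-contained proof, this is the right skeleton; just be aware that the present paper treats the result as a black box.
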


Now we recall some properties of Moreau envelope. 
\begin{lemma}[Basic Properties of Moreau Envelope]\label{lem_prop_moreau}
Consider an $\ell$-weakly convex function $f\colon\RR^d\to\RR\cup\{\infty\}$ and fix a parameter $\lambda\in(0,\ell^{-1})$. The following statements are true.
\begin{enumerate}
\item The envelope $f_{\lambda}(\cdot)$ is $C^1$-smooth with its gradient given by
\begin{equation}\label{eqn:grad_moreau}
\nabla f_{\lambda}(x)=\lambda^{-1}(x-\prox_{\lambda f}(x)).
\end{equation}
\item The proximal map $\prox_{\lambda f}(\cdot)$ is $\frac{1}{1-\lambda \ell}$-Lipschitz continuous and the gradient map $\nabla f_{\lambda}$ is Lipschitz continuous with constant $\max\{\lambda^{-1},\frac{\ell}{1-\lambda \ell}\}$.
\item The critical points of $f(\cdot)$ and $f_{\lambda}(\cdot)$ coincide. In particular, they are exactly the fixed points of the proximal mapping $\prox_{\lambda f}(\cdot)$. Moreover, if $\bar{x}$ is a local minimum of $f_{\lambda}(\cdot)$, then $\bar{x}$ is a local minimum of $f(\cdot)$.
\end{enumerate}
\end{lemma}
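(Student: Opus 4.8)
The plan is to reduce every claim to classical facts about Moreau envelopes of \emph{convex} functions through a quadratic shift, which is the natural route for weakly convex $f$. Put $\tau:=\frac{1-\lambda\ell}{\lambda}>0$ and $\phi:=f+\frac{\ell}{2}\norm{\cdot}^2$, which is proper, closed, and convex by $\ell$-weak convexity of $f$. Completing the square in the prox objective gives the identity
\[
f(y)+\tfrac{1}{2\lambda}\norm{y-x}^2=\phi(y)+\tfrac{\tau}{2}\norm{y-\tfrac{x}{\lambda\tau}}^2-\tfrac{\ell\norm{x}^2}{2(1-\lambda\ell)},
\]
so that $\prox_{\lambda f}(x)=\prox_{\phi/\tau}\!\big(\tfrac{x}{\lambda\tau}\big)$ and $f_\lambda(x)=(\phi)_{1/\tau}\!\big(\tfrac{x}{\lambda\tau}\big)-\tfrac{\ell\norm{x}^2}{2(1-\lambda\ell)}$, where $(\phi)_{1/\tau}$ is the convex Moreau envelope of $\phi$ with parameter $1/\tau$. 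In particular the prox of $f$ is well defined and single valued, since $\phi$ plus a strongly convex quadratic is strongly convex, proper, and closed.

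For item 1, I would invoke the standard fact that for proper closed convex $\phi$ the envelope $(\phi)_{1/\tau}$ is $C^1$ with $\nabla(\phi)_{1/\tau}(z)=\tau\big(z-\prox_{\phi/\tau}(z)\big)$; since $f_\lambda$ is the composition of this $C^1$ map with the linear substitution $x\mapsto x/(\lambda\tau)$ plus a smooth quadratic, $f_\lambda\in C^1$, and a one-line algebraic simplification using $\tfrac{1}{\lambda}-\ell=\tau$ collapses $\nabla f_\lambda(x)$ to $\lambda^{-1}\big(x-\prox_{\lambda f}(x)\big)$, which is \eqref{eqn:grad_moreau}.

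For item 2, the Lipschitz bound on $\prox_{\lambda f}$ is immediate from $\prox_{\lambda f}(x)=\prox_{\phi/\tau}(x/(\lambda\tau))$ and nonexpansiveness of the convex prox, giving the constant $\tfrac{1}{\lambda\tau}=\tfrac{1}{1-\lambda\ell}$. For $\nabla f_\lambda$, I would set $a:=x-x'$ and $b:=\prox_{\lambda f}(x)-\prox_{\lambda f}(x')$; firm nonexpansiveness of $\prox_{\phi/\tau}$ yields $\langle a,b\rangle\ge(1-\lambda\ell)\norm{b}^2$, hence $\norm{a-b}^2\le\norm{a}^2-(1-2\lambda\ell)\norm{b}^2$. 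When $\lambda\ell\le\tfrac12$ the right-hand side is $\le\norm{a}^2$; when $\lambda\ell>\tfrac12$, substituting $\norm{b}\le\tfrac{1}{1-\lambda\ell}\norm{a}$ gives $\norm{a-b}\le\tfrac{\lambda\ell}{1-\lambda\ell}\norm{a}$. Since $\nabla f_\lambda(x)-\nabla f_\lambda(x')=\lambda^{-1}(a-b)$ by \eqref{eqn:grad_moreau}, this yields the Lipschitz constant $\max\{\lambda^{-1},\tfrac{\ell}{1-\lambda\ell}\}$. Obtaining this sharp constant, rather than the weaker one coming from a naive triangle inequality on $\lambda^{-1}(x-\prox_{\lambda f}(x))$, is the only delicate point and is where I expect the main (still mild) difficulty.

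For item 3, by \eqref{eqn:grad_moreau} one has $\nabla f_\lambda(\bar x)=0$ iff $\bar x=\prox_{\lambda f}(\bar x)$, i.e. iff $\bar x$ is a fixed point of the prox; and the Fermat condition for the (convex, after the shift) prox subproblem evaluated at $y=\bar x$ — where the quadratic term contributes nothing to the subdifferential — shows this is equivalent to $0\in\partial f(\bar x)$, so the three sets of points coincide. Finally, if $\bar x$ is a local minimizer of $f_\lambda$ then $\nabla f_\lambda(\bar x)=0$, hence $\prox_{\lambda f}(\bar x)=\bar x$ and $f_\lambda(\bar x)=f(\bar x)$; combining the universal inequality $f_\lambda\le f$ with local minimality of $f_\lambda$ gives $f(y)\ge f_\lambda(y)\ge f_\lambda(\bar x)=f(\bar x)$ for all $y$ near $\bar x$, so $\bar x$ is a local minimizer of $f$.
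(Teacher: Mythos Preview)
Your proposal is correct. For items 1 and 2 the paper does not give an argument at all --- it simply cites \cite[Lemma 2.5]{davis2019active} --- whereas you supply a self-contained proof via the quadratic shift $\phi=f+\tfrac{\ell}{2}\norm{\cdot}^2$ reducing to the convex Moreau theory; your case split on $\lambda\ell\lessgtr\tfrac12$ using firm nonexpansiveness is a clean way to recover the sharp constant $\max\{\lambda^{-1},\tfrac{\ell}{1-\lambda\ell}\}$, which a naive triangle inequality would miss. For item 3 your argument coincides with the paper's: both use \eqref{eqn:grad_moreau} to identify critical points of $f_\lambda$ with fixed points of $\prox_{\lambda f}$ (and hence, via the optimality condition of the prox subproblem, with critical points of $f$), and both deduce the local-minimum claim from $f_\lambda(\bar{x})=f(\bar{x})$ together with the sandwich $f\ge f_\lambda\ge f_\lambda(\bar{x})$ near $\bar{x}$.
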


Theorem \ref{thm:Davis-3-theorems} and Lemma \ref{lem_prop_moreau} immediately implies the following sufficient condition for local minimum of problems \eqref{ppa-prob}, \eqref{pgm-prob} and \eqref{plm-prob}.
\begin{corollary}\label{local_min}
Assume Assumption \ref{assum:strict-saddle-property} holds. For problems \eqref{ppa-prob} and \eqref{pgm-prob}, assume function $f(x)$ is nonsmooth $\ell$-weakly convex and admits a $C^2$ active manifold at $\bar{x}$. For problem \eqref{plm-prob}, assume function $f(x)$ is nonsmooth $\ell$-weakly convex and admits a $C^2$ composite active manifold at $\bar{x}$. Then $\bar{x}$ is a local minimum of \eqref{ppa-prob}, \eqref{pgm-prob}, and \eqref{plm-prob} if the following holds for $\lambda\in(0,\ell^{-1})$:
\[\norm{\nabla f_\lambda(x)} = 0, \quad \text{and} \quad \lambda_{\max}(\nabla S(x)) \le 1.\]
\end{corollary}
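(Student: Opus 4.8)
The plan is to obtain the conclusion directly by stitching together Lemma \ref{lem_prop_moreau}, Theorem \ref{thm:Davis-3-theorems}, and the strict saddle property of Assumption \ref{assum:strict-saddle-property}. First I would exploit the first-order hypothesis: by item 1 of Lemma \ref{lem_prop_moreau} we have $\nabla f_\lambda(\bar{x}) = \lambda^{-1}(\bar{x} - \prox_{\lambda f}(\bar{x}))$, so $\norm{\nabla f_\lambda(\bar{x})} = 0$ forces $\bar{x} = \prox_{\lambda f}(\bar{x})$, i.e. $\bar{x}$ is a fixed point of the proximal map; by item 3 of the same lemma this makes $\bar{x}$ a critical point of both $f$ and $f_\lambda$. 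In particular $\bar{x}$ falls under the hypotheses of Theorem \ref{thm:Davis-3-theorems}.

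Next I would argue by contradiction. Suppose $\bar{x}$ is not a local minimizer of $f$. Since $\bar{x}$ is a critical point of $f$ and $f$ satisfies the strict saddle property, $\bar{x}$ must be a strict saddle point of $f$ (in the composite sense of \cite{davis2019active} in the case of problem \eqref{plm-prob}). The active-manifold conditions imposed in the corollary are precisely those required by Theorem \ref{thm:Davis-3-theorems}, so I can invoke the relevant item of that theorem for each of \eqref{ppa-prob}, \eqref{pgm-prob}, \eqref{plm-prob}: in every case $S$ is $C^1$-smooth near $\bar{x}$ and its Jacobian $\nabla S(\bar{x})$ has a real eigenvalue strictly greater than one. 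This gives $\lambda_{\max}(\nabla S(\bar{x})) > 1$, contradicting the second hypothesis $\lambda_{\max}(\nabla S(\bar{x})) \le 1$. Hence $\bar{x}$ is a local minimizer of $f$, which is the claimed statement for all three problems.

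I do not expect a serious obstacle, since the corollary is largely a repackaging of the cited results; the one point that needs care is the reading of $\lambda_{\max}(\nabla S(\bar{x}))$. Because $\nabla S(\bar{x})$ need not be symmetric a priori, I would either note that Theorem \ref{thm:Davis-3-theorems} supplies a \emph{real} eigenvalue exceeding one, so the largest real eigenvalue — the quantity constrained by $\lambda_{\max}(\nabla S(\bar{x})) \le 1$ — is already greater than one, or recall from \cite{davis2019active} that near an active manifold the proximal Jacobian is in fact symmetric, rendering $\lambda_{\max}$ unambiguous. A secondary bookkeeping point is that $\nabla S(\bar{x})$ and the conclusion of Theorem \ref{thm:Davis-3-theorems} require the step size $\eta$ to be sufficiently small, which should be carried along together with the condition $\lambda \in (0,\ell^{-1})$ already present in the statement.
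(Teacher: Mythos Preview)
Your proposal is correct and matches the paper's approach: the paper simply states that Theorem \ref{thm:Davis-3-theorems} and Lemma \ref{lem_prop_moreau} immediately imply the corollary, and your argument---using Lemma \ref{lem_prop_moreau} to certify that $\bar{x}$ is a critical point, invoking the strict saddle property to rule out the non-local-minimum alternative, and contradicting the eigenvalue bound via Theorem \ref{thm:Davis-3-theorems}---is precisely how that implication unfolds. Your care about the interpretation of $\lambda_{\max}$ and the small-$\eta$ requirement is appropriate bookkeeping that the paper leaves implicit.
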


From Corollary \ref{local_min}, we can naturally define $\epsilon$-approximate local minimum for problems \eqref{ppa-prob}, \eqref{pgm-prob} and \eqref{plm-prob} as follows.
\begin{definition}[$\epsilon$-approximate local minimum]\label{defn:localmin_general}
Assume Assumption \ref{assum:strict-saddle-property} holds. For problems \eqref{ppa-prob} and \eqref{pgm-prob}, assume function $f$ is nonsmooth $\ell$-weakly convex and admits a $C^2$ active manifold at $\bar{x}$. For problem \eqref{plm-prob}, assume function $f$ is nonsmooth $\ell$-weakly convex and admits a $C^2$ composite active manifold at $\bar{x}$. Assume $\nabla S(\cdot)$ is Lipschitz continuous in the neighborhood of each critical point with Lipschitz constant $\eta\rho$. We call $\bar{x}$ an $\epsilon$-approximate local minimum of problems \eqref{ppa-prob}, \eqref{pgm-prob} and \eqref{plm-prob} if the following holds for $\lambda\in(0,\ell^{-1})$:
\be\label{defn:localmin_general-inequality}\norm{\nabla f_\lambda(\bar{x})} \le \epsilon, \quad \lambda_{\max}(\nabla S(\bar{x})) < 1 + \eta\sqrt{\rho\epsilon}.\ee
\end{definition}

\begin{remark}
Here we only assume $\nabla S(\cdot)$ is Lipschitz continuous in the neighborhoods of all critical points. 
The $\epsilon$-approximate local minimum generalizes the concept of $\epsilon$-second-order stationary point \cite{nesterov2006cubic} to nonsmooth problems. For smooth problems, the gradient descent operator is defined as $S(x) = x - \eta \nabla f(x)$, whose Jacobian is $\nabla S(x) = I - \eta \nabla^2 f(x)$. By replacing $\norm{\nabla f_\lambda(x)} \le \epsilon$ in \eqref{defn:localmin_general-inequality} by $\norm{\nabla f(x)} \le \epsilon$, the conditions in \eqref{defn:localmin_general-inequality} are the same as the ones required in the definition of $\epsilon$-second-order stationary point (Definition \ref{defn:nesterov-eps-2nd-sp}) for smooth problems.
\end{remark}

\section{Main Results}
In this section, we provide iteration complexity results of the three perturbed proximal algorithms in Algorithm \ref{algo:PPA} for obtaining an $\epsilon$-approximate local minimum. 

\subsection{An Easy Case: Perturbed Proximal Point Algorithm}
The perturbed PPA is easy to analyze, because its complexity can be obtained by directly applying the results in \cite{jin2019nonconvex}. To see this, 

note that Lemma \ref{lem_prop_moreau} immediately implies the following result.
\begin{corollary}\label{cor:moreau-gradient-descent}
PPA \eqref{ppa-alg} with $\eta=\lambda$ for solving \eqref{ppa-prob} is equivalent to
\[x_{t+1} := x_t -\lambda\nabla f_\lambda(x_t),\]
i.e., the gradient descent algorithm for
\be\label{min-f-lambda}\min \ f_\lambda(x), \ee
with step size $\lambda$.
\end{corollary}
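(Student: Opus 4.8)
The plan is to unwind the definitions and apply the gradient formula for the Moreau envelope from Lemma \ref{lem_prop_moreau}. First I would observe that with the choice $\eta = \lambda$, the PPA update \eqref{ppa-alg} reads $x_{t+1} = S(x_t) = \prox_{\eta f}(x_t) = \prox_{\lambda f}(x_t)$, since the step size $\eta$ and the Moreau parameter $\lambda$ now coincide. Because $\lambda \in (0,\ell^{-1})$, Lemma \ref{lem_prop_moreau} applies: the envelope $f_\lambda$ is $C^1$-smooth and its gradient is given by \eqref{eqn:grad_moreau}, namely $\nabla f_\lambda(x) = \lambda^{-1}(x - \prox_{\lambda f}(x))$.

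Next I would rearrange this identity to express the proximal map in terms of the envelope gradient, $\prox_{\lambda f}(x) = x - \lambda \nabla f_\lambda(x)$, and substitute $x = x_t$. This yields $x_{t+1} = x_t - \lambda \nabla f_\lambda(x_t)$, which is exactly the gradient descent iteration for $\min_x f_\lambda(x)$ with step size $\lambda$, establishing the claimed equivalence.

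There is essentially no obstacle here: the statement is a direct consequence of the gradient formula for the Moreau envelope, and the only ``work'' is matching the PPA step size to the Moreau parameter. I might append a short remark noting that, since $f_\lambda$ is moreover $C^1$ with Lipschitz gradient (again by Lemma \ref{lem_prop_moreau}), the reformulated problem \eqref{min-f-lambda} sits squarely in the smooth setting of \cite{jin2019nonconvex}; but that observation is really the starting point of the subsequent complexity analysis rather than part of the proof of the corollary itself.
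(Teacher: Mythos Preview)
Your proposal is correct and matches the paper's approach: the paper does not write out a separate proof but simply states that the corollary follows immediately from Lemma~\ref{lem_prop_moreau}, which is exactly the rearrangement of \eqref{eqn:grad_moreau} that you carry out.
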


Corollary \ref{cor:moreau-gradient-descent} indicates that the perturbed PPA for solving \eqref{ppa-prob} is exactly the same as the perturbed gradient method for solving \eqref{min-f-lambda}. Therefore, we can give the iteration complexity of perturbed PPA using the iteration complexity of the perturbed gradient method from \cite{jin2019nonconvex}.

\begin{theorem}\label{thm:proximalpoint}
Denote the optimal value of \eqref{min-f-lambda} as $f_\lambda^\star$. Assume function $f(\cdot)$ is nonsmooth $\ell$-weakly convex and admits a $C^2$-smooth manifold around all its critical points. This implies that $f_\lambda(\cdot)$ is twice differentiable and its Hessian is Lipschitz continuous around all its critical points. We use $\rho$ to denote the Lipschitz constant of the Hessian of $f_\lambda(\cdot)$.
For any $\epsilon \in (0,1/(\lambda^2\rho))$ and $\delta\in(0,1)$, we set the parameters as
\be\label{eq:para_pp}
\lambda < \frac{\ell^{-1}}{2}, \quad \eta = \lambda, \quad r =\frac{\epsilon}{400 \iota^3}, \quad \utime = \frac{1}{\lambda \sqrt{\rho\epsilon}} \cdot\iota, \quad \ufun = \frac{1}{50 \iota^3} \sqrt{\frac{\epsilon^3}{\rho}}, \quad \uspace = \frac{1}{4\iota} \sqrt{\frac{\epsilon}{\rho}},
\ee
where $\iota = c \cdot \log( d(f_\lambda(x_0) - f_\lambda^\star)/(\lambda\rho\epsilon\delta))$, and $c$ is a sufficiently large absolute constant. We set the number of iterations as
\[
T = 4 \max\left\{\frac{(f_\lambda(x_0) - f_\lambda^*)}{\ufun / (2\utime)}, \frac{(f_\lambda(x_0) - f_\lambda^*)}{ \eta\epsilon^2/2}\right\}  = O\left(\frac{(f_\lambda(x_0) - f_\lambda^*)}{\lambda\epsilon^2} \cdot (\log d)^4\right).
\]
Then the Perturbed PPA (Algorithm~\ref{algo:PPA} with $S(x) = \prox_{\eta f}(x)$) has the property that with probability at least $1-\delta$, at least one half of its iterates in the first $T$ iterations satisfy
\be\label{local_min_pp}
\norm{\nabla f_\lambda(x)} \le \epsilon, \quad \lambda_{\min}(\nabla^2 f_\lambda(x)) > -\sqrt{\rho \epsilon}.
\ee
This indicates that $x$ is an $\epsilon$-approximate local minimum of problem \eqref{ppa-prob} according to Definition \ref{defn:localmin_general}.
\end{theorem}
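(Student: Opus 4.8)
The plan is to observe that Theorem~\ref{thm:proximalpoint} is essentially a translation problem: by Corollary~\ref{cor:moreau-gradient-descent}, running Perturbed PPA on \eqref{ppa-prob} with $\eta=\lambda$ is literally running the perturbed gradient method of Jin \etal \cite{jin2019nonconvex} on the smooth objective \eqref{min-f-lambda}. So the first step is to verify that the hypotheses of the perturbed gradient descent theorem from \cite{jin2019nonconvex} are met by $f_\lambda(\cdot)$. Two ingredients are needed: (i) $f_\lambda$ is $C^1$ with a Lipschitz gradient — this is exactly parts 1 and 2 of Lemma~\ref{lem_prop_moreau}, which give gradient Lipschitz constant $\max\{\lambda^{-1},\ell/(1-\lambda\ell)\}$, and the choice $\lambda<\ell^{-1}/2$ makes this equal to $\lambda^{-1}$, matching the step size $\eta=\lambda$ used for gradient descent; (ii) $\nabla^2 f_\lambda$ is $\rho$-Lipschitz in a neighborhood of each critical point. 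For (ii) I would invoke the hypothesis that $f$ admits a $C^2$ active manifold around each critical point: on this manifold the proximal map $\prox_{\lambda f}$ is $C^1$ by part 1 of Theorem~\ref{thm:Davis-3-theorems}, hence $\nabla f_\lambda(x)=\lambda^{-1}(x-\prox_{\lambda f}(x))$ is $C^1$, i.e., $f_\lambda$ is $C^2$ near critical points, and local $C^2$-smoothness yields a local Hessian-Lipschitz constant $\rho$.

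Second, I would plug these facts into the quantitative statement of the perturbed gradient method. The parameter choices in \eqref{eq:para_pp} — perturbation radius $r$, threshold time $\utime$, function-decrease unit $\ufun$, and escape distance $\uspace$, all scaled by the log factor $\iota$ — are precisely the constants appearing in the analysis of \cite{jin2017escape,jin2019nonconvex} with gradient-Lipschitz constant $\lambda^{-1}$ (equivalently $1/\eta$) and Hessian-Lipschitz constant $\rho$. With $T$ set as in the statement, the perturbed gradient method theorem guarantees that with probability at least $1-\delta$, at least half of the first $T$ iterates $x$ satisfy $\norm{\nabla f_\lambda(x)}\le\epsilon$ and $\lambda_{\min}(\nabla^2 f_\lambda(x))>-\sqrt{\rho\epsilon}$, which is exactly \eqref{local_min_pp}. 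The iteration-count simplification to $O((f_\lambda(x_0)-f_\lambda^\star)/(\lambda\epsilon^2)\cdot(\log d)^4)$ follows from substituting the expressions for $\ufun$ and $\utime$ and absorbing $\iota=\Theta(\log d)$ into the stated order.

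Third, I would close the loop back to Definition~\ref{defn:localmin_general}: I must argue that \eqref{local_min_pp} — a condition phrased in terms of the Hessian of $f_\lambda$ — implies the condition $\lambda_{\max}(\nabla S(\bar x))<1+\eta\sqrt{\rho\epsilon}$ of Definition~\ref{defn:localmin_general}. When $\eta=\lambda$, Corollary~\ref{cor:moreau-gradient-descent} gives $S(x)=x-\lambda\nabla f_\lambda(x)$, so $\nabla S(x)=I-\lambda\nabla^2 f_\lambda(x)$ and therefore $\lambda_{\max}(\nabla S(x))=1-\lambda\,\lambda_{\min}(\nabla^2 f_\lambda(x))$. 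Then $\lambda_{\min}(\nabla^2 f_\lambda(x))>-\sqrt{\rho\epsilon}$ is equivalent to $\lambda_{\max}(\nabla S(x))<1+\lambda\sqrt{\rho\epsilon}=1+\eta\sqrt{\rho\epsilon}$, matching \eqref{defn:localmin_general-inequality}. I should also note that the consistency of the Lipschitz constant $\eta\rho$ for $\nabla S$ assumed in Definition~\ref{defn:localmin_general} follows from $\nabla S=I-\lambda\nabla^2 f_\lambda$ together with the $\rho$-Lipschitzness of $\nabla^2 f_\lambda$, since $\eta=\lambda$.

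The main obstacle, and the place to be most careful, is step one — specifically establishing $C^2$-smoothness (and hence a local Hessian-Lipschitz constant) of $f_\lambda$ at critical points, since the perturbed gradient machinery from \cite{jin2019nonconvex} assumes a \emph{global} Hessian-Lipschitz bound while here smoothness of $\prox_{\lambda f}$ is only guaranteed on neighborhoods of critical points by Theorem~\ref{thm:Davis-3-theorems}. I would address this by noting that the analysis of \cite{jin2019nonconvex} only ever invokes the Hessian-Lipschitz property (a) when a small gradient has been detected, i.e., near an approximate critical point where we are already in the good neighborhood, and (b) along the short escape trajectory of length $\uspace$ emanating from such a point, which by the choice of $\uspace=\Theta(\sqrt{\epsilon/\rho})$ stays inside that neighborhood; globally, the gradient-Lipschitz bound alone suffices for the monotone-descent bookkeeping. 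A clean way to present this is to state it as a localized version of the perturbed-gradient theorem and remark that its proof is identical to \cite{jin2019nonconvex} with the Hessian-Lipschitz hypothesis used only locally, which is all that is available here. Everything else is routine substitution of constants.
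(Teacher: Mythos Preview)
Your proposal is correct and follows exactly the paper's approach: the paper's entire argument for Theorem~\ref{thm:proximalpoint} is the paragraph preceding it, which invokes Corollary~\ref{cor:moreau-gradient-descent} to identify perturbed PPA with perturbed gradient descent on $f_\lambda$ and then cites \cite{jin2019nonconvex} directly. Your write-up is in fact more careful than the paper itself---you explicitly check the gradient-Lipschitz constant via Lemma~\ref{lem_prop_moreau}, derive $C^2$-smoothness of $f_\lambda$ near critical points from Theorem~\ref{thm:Davis-3-theorems}, translate \eqref{local_min_pp} back to Definition~\ref{defn:localmin_general} via $\nabla S = I - \lambda\nabla^2 f_\lambda$, and flag the local-versus-global Hessian-Lipschitz issue---all points the paper leaves implicit.
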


\subsection{Iteration Complexity of the Three Perturbed Proximal Algorithms}
In this section, we provide a unified analysis of the iteration complexity of the three perturbed proximal algorithms presented in Algorithm \ref{algo:PPA} for obtaining an $\epsilon$-approximate local minimum. For the ease of presentation, we define in Table \ref{table:update_map} some model functions for the three proximal algorithms. By making assumptions on the function model $f_x(\cdot)$, we can analyze all proximal algorithms in a unified manner.

\begin{table}[t]
  \centering
  \begin{tabular}{lll}
    \toprule
    Algorithm& Objective  & Model function $f_x(y)$ \\
    \midrule
    Prox-point  &   $f(x)$ & $ f(y)$ \\
    Prox-gradient  & $ g(x)+r(x)$  & $ g(x)+\langle \nabla g(x),y-x\rangle+{m}(y)$  \\
    Prox-linear  & $ h(F(x))+r(x)$ & $  h(F(x)+\nabla F(x)(y-x))+{m}(y)$  \\
    \bottomrule
  \end{tabular}
   \caption{The model functions $f_x(y)$ for the three problems \eqref{ppa-prob}, \eqref{pgm-prob} and \eqref{plm-prob}.} 
   \label{table:update_map}
\end{table}

The following assumption is from \cite{davis2019active} and is assumed throughout this section.

\begin{assumption}[\cite{davis2019active}]\label{assump:function}
\begin{enumerate}
\item Function $f(\cdot)$ is nonsmooth weakly convex and admits a $C^2$ (composite) active manifold around all critical points.
\item For all $x\in \RR^d$, there exists a constant $\beta >0$ such that the function model $f_x: \RR^d \to \RR\cup\{\infty\}$ satisfies
    \[\norm{f(y) - f_x(y)} \le \frac{\beta}{2}\norm{y-x}^2, \quad \text{ for all } y \in \RR^d.\]
\item The function model $f_x: \RR^d \to \RR\cup\{\infty\}$ itself is $\mu$-weakly convex.
\end{enumerate}
\end{assumption}

When the objective function $f(\cdot)$ admits a $C^2$ active manifold around critical points, the operators $S(\cdot)$ defined in \eqref{ppa-alg}, \eqref{pgm-alg} and \eqref{plm-alg} are $C^1$ smooth. Here, we make one additional assumption that $\nabla S$ is Lipschitz continuous. 

\begin{assumption}\label{assump:LipschitzS}
There exists a constant $\rho > 0$ such that the following inequality holds in the neighborhood $\U$ of all critical points:
\[\norm{\nabla S(x) - \nabla S(y)} \le \eta\rho\norm{x-y}, \quad \forall x, y \in \U,\]
where $\eta=\lambda$ in \eqref{ppa-alg}.
\end{assumption}
Note that when $f$ is a smooth function, this assumption reduces to the Hessian Lipschitz assumption.

Now we are ready to present our main iteration complexity result. 

\begin{theorem}\label{thm:main}
Suppose function $f(\cdot)$ and its model function $f_x(\cdot)$ satisfy Assumptions \ref{assump:function} and \ref{assump:LipschitzS}. For any $\epsilon \in (0, L^2/\rho)$ and $\delta\in(0,1)$, we set the following parameters:
\begin{equation}\label{eq:para}
\bad
&L > \frac{7}{2}\beta +  3\mu , \quad \lambda = \left(\frac{1}{2}L + \frac{1}{4}(\beta + 2\mu)\right)^{-1},\\
& \theta_1 = \frac{( L - \lambda^{-1}+\beta)\lambda^2L^2}{(L - \lambda^{-1}  - \beta )},\quad \theta_2 = \frac{(\lambda^{-1} - \beta - \mu )L\lambda}{(\lambda^{-1} + L - \beta - 2\mu)} ,\\
&\eta = \frac{1}{L}, \quad r =\frac{\theta_2}{\theta_1}\cdot \frac{\epsilon}{400 \iota^3}, \quad \utime = \frac{L}{ \sqrt{\rho\epsilon}} \cdot\iota,
\quad \ufun =\frac{\theta_2}{\theta_1}\cdot \frac{1}{50 \iota^3} \sqrt{\frac{\epsilon^3}{\rho}}, \quad
\uspace = \frac{1}{4\iota} \sqrt{\frac{\epsilon}{\rho}},
\ead
\end{equation}
where
\be\label{main-thm-def-iota}\iota = c \cdot \log( d L(f_\lambda(x_0) - f_\lambda^\star)/(\rho\epsilon\delta)),\ee
and $c$ is a sufficiently large absolute constant. Moreover, we set the number of iterations as
\be\label{main-thm-def-T}
T = 4 \max\left\{\frac{(f_\lambda(x_0) - f_\lambda^*)}{\ufun / (2\utime)}, \frac{(f_\lambda(x_0) - f_\lambda^*)}{\theta_2 \eta\epsilon^2/2}\right\}  = O\left(\frac{\theta_1L(f_\lambda(x_0) - f_\lambda^*)}{\theta_2\epsilon^2} \cdot (\log d)^4\right).
\ee
Then the Perturbed Proximal algorithms (Algorithm~\ref{algo:PPA}) for all three proximal operators have the property that with probability at least $1-\delta$, at least one half of its iterations will be $\epsilon$-approximate local minimum (Definition~\ref{defn:localmin_general}).
\end{theorem}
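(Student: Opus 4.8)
The plan is to reduce the analysis of the unified perturbed proximal iteration $x_{t+1}=S(x_t)$ (with occasional perturbation) to a suitable variant of the perturbed gradient descent analysis of Jin \etal \cite{jin2019nonconvex}, using the Moreau envelope $f_\lambda$ as the potential (Lyapunov) function. Corollary~\ref{cor:moreau-gradient-descent} already shows that for the proximal point operator the iteration \emph{is} gradient descent on $f_\lambda$, so Theorem~\ref{thm:proximalpoint} is a direct instantiation; the extra work for the proximal gradient and proximal linear operators is to show that although $S(\cdot)$ is \emph{not} literally a gradient step on $f_\lambda$, it is close enough that the two-phase ``large gradient''/``escape saddle'' bookkeeping still goes through, with the distortion absorbed into the constants $\theta_1,\theta_2$ appearing in \eqref{eq:para}.

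First I would establish a sufficient-decrease lemma: for $x$ in the relevant region, the step $x^+:=S(x)$ obeys a two-sided comparison between $\norm{x^+-x}$ and $\norm{\nabla f_\lambda(x)}$ with ratios governed by $\theta_2$ and $\theta_1$, together with $f_\lambda(x^+)\le f_\lambda(x)-\tfrac12\theta_2\eta\,\norm{\nabla f_\lambda(x)}^2$. This is the nonsmooth analogue of the descent lemma; it follows from the model accuracy $\norm{f(y)-f_x(y)}\le\tfrac{\beta}{2}\norm{y-x}^2$ and the weak convexity of both $f$ and the model $f_x$ (Assumption~\ref{assump:function}), combined with the parameter choices $L>\tfrac72\beta+3\mu$, $\lambda=(\tfrac12L+\tfrac14(\beta+2\mu))^{-1}$, $\eta=1/L$, which are precisely what makes $\theta_1,\theta_2>0$ and $\lambda<\ell^{-1}$; the estimate is in the spirit of the analysis of the stochastic proximal methods in \cite{davis2019stochastic,davis2019active}. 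An immediate consequence is the ``improve-or-localize'' principle: telescoping gives $\sum_t\norm{x_{t+1}-x_t}^2\le\tfrac{2}{\theta_2\eta}(f_\lambda(x_0)-f_\lambda^\star)$, so whenever $f_\lambda$ has not decreased much the iterates stay confined to a small ball.

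The core step is the escape analysis near an active strict saddle. If at some iterate $\norm{\nabla f_\lambda(x_t)}\le\epsilon$ but $\lambda_{\max}(\nabla S(x_t))\ge1+\eta\sqrt{\rho\epsilon}$, then by the strict-saddle property (Assumption~\ref{assum:strict-saddle-property}), Theorem~\ref{thm:Davis-3-theorems} and Lemma~\ref{lem_prop_moreau} the point $x_t$ lies in a neighborhood of a strict saddle $\bar{x}$ at which $\nabla S(\bar{x})$ has a real eigenvalue strictly larger than one and $\bar{x}$ is a fixed point of the prox map. After the perturbation $x_t\leftarrow x_t-\eta\xi_t$ I would run the coupling-sequence argument of \cite{jin2019nonconvex}: writing the one-step dynamics as $x_{t+1}-\bar{x}=\nabla S(\bar{x})(x_t-\bar{x})+e_t$ with $\norm{e_t}\le\tfrac12\eta\rho\,\norm{x_t-\bar{x}}^2$ by Assumption~\ref{assump:LipschitzS}, the matrix $\nabla S(\bar{x})$ here plays the role that $I-\eta\nabla^2f(\bar{x})$ plays in the smooth case, so two trajectories whose initial separation has a component along the dominant eigenvector must diverge; combined with the localization bound this forces at least one of them to leave the ball, and the sufficient-decrease lemma then certifies that $f_\lambda$ has dropped by at least $\ufun$ within $\utime$ iterations, with probability at least $1-\delta'$ for an appropriate per-epoch failure probability $\delta'$. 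This porting of the width/coupling estimates from honest gradient descent to the map $S$ --- with $\nabla S(\bar{x})$ replacing $I-\eta\nabla^2f$ and the quadratic remainder controlled only through Assumption~\ref{assump:LipschitzS} --- is the step I expect to be the main obstacle, since the geometric-growth and thickness-of-the-stuck-region estimates must be re-derived without the symmetry and explicit eigenstructure available in the Hessian case.

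Finally I would assemble the complexity count as in \cite{jin2019nonconvex}: classify each iteration as (i) a \emph{large-gradient} step, $\norm{\nabla f_\lambda(x_t)}>\epsilon$, on which $f_\lambda$ drops by at least $\tfrac12\theta_2\eta\epsilon^2$ by the descent lemma; (ii) part of a length-$\utime$ \emph{escape epoch} triggered by a perturbation at a near-saddle, on which $f_\lambda$ drops by at least $\ufun$; or (iii) none of the above, in which case $\norm{\nabla f_\lambda(x_t)}\le\epsilon$ and $\lambda_{\max}(\nabla S(x_t))<1+\eta\sqrt{\rho\epsilon}$, i.e.\ $x_t$ is an $\epsilon$-approximate local minimum by Definition~\ref{defn:localmin_general}. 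Since $f_\lambda$ is bounded below by $f_\lambda^\star$ and, thanks to the choice $r=\tfrac{\theta_2}{\theta_1}\cdot\tfrac{\epsilon}{400\iota^3}$, each perturbation raises $f_\lambda$ by a negligible amount relative to $\ufun$, the number of type-(i) steps is at most $2(f_\lambda(x_0)-f_\lambda^\star)/(\theta_2\eta\epsilon^2)$ and the number of escape epochs is at most $2(f_\lambda(x_0)-f_\lambda^\star)/\ufun$, each costing $\utime$ iterations; with $T$ as in \eqref{main-thm-def-T} these consume at most $T/2$ iterations, and a union bound over the escape epochs --- which is where the logarithmic factor $\iota$ from \eqref{main-thm-def-iota} enters --- keeps the total failure probability below $\delta$. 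Hence with probability at least $1-\delta$ at least half of the first $T$ iterates fall into category (iii), which is the claim.
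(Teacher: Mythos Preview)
Your proposal is correct and follows essentially the same route as the paper: a sufficient-decrease lemma for $f_\lambda$ under the proximal map (the paper's Lemma~\ref{lemma:prox_decrease}, proved via a three-point inequality from Assumption~\ref{assump:function}), an improve-or-localize bound, a coupling-sequence escape argument with $\nabla S$ in place of $I-\eta\nabla^2 f$ (Lemmas~\ref{lemma:CS} and~\ref{lemma:ESP}), and the same three-case bookkeeping to conclude. One small simplification the paper makes relative to your sketch: the linearization in the coupling argument is taken at the current near-saddle iterate $\tilde{x}$ itself (setting $\mathcal{H}=\nabla S(\tilde{x})$ and using the hypothesis $\lambda_{\max}(\nabla S(\tilde{x}))>1+\eta\sqrt{\rho\epsilon}$ directly) rather than at a nearby exact fixed point $\bar{x}$, so no separate existence or proximity argument for $\bar{x}$ via Theorem~\ref{thm:Davis-3-theorems} is needed.
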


\subsection{Proof Sketch}
In \cite{jin2019nonconvex}, Jin \etal analyzed the iteration complexity of the perturbed gradient descent method for minimizing smooth functions. Our proof here largely follows the one in \cite{jin2019nonconvex}. The new ingredient is to extend the results in the smooth case to the nonsmooth case by utilizing properties of the Moreau envelope and proximal operators.

Following the main proof ideas of \cite{jin2019nonconvex}, we give the following two lemmas which guarantee sufficient decrease of the Moreau envelope.

\begin{lemma}[Sufficient Decrease of Moreau envelope]\label{lemma:prox_decrease}
With parameters chosen as in \eqref{eq:para}, we have
\be\label{lemma-4.7-eq-1}
\norm{x_{t+1} -x_t}^2 \le \theta_1\eta^2\norm{\nabla f_\lambda(x_t)}^2,
\ee
where $\theta_1 = \frac{( L - \lambda^{-1} +\beta)\lambda^2L^2}{(L - \lambda^{-1}  - \beta )}$.
Moreover, we have
\be \label{eq:suff_decrease}
\bad
f_\lambda(x_{t+1}) \le f_\lambda(x_{t}) -\frac{\theta_2}{2L} \norm{\nabla f_\lambda(x_t)}^2,
\ead
\ee
where $\theta_2 = \frac{(\lambda^{-1} - \beta - \mu )L\lambda}{(\lambda^{-1} + L - \beta - 2\mu)} > 0.$
\end{lemma}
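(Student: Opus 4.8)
The plan is to adapt the model-based descent argument of Davis and Drusvyatskiy (cf. \cite{davis2019stochastic}), carried out throughout in terms of the true proximal point of $f$ rather than the model step. Write $\hat{x}_t := \prox_{\lambda f}(x_t)$. By Lemma~\ref{lem_prop_moreau} we have $\nabla f_\lambda(x_t) = \lambda^{-1}(x_t - \hat{x}_t)$, hence $\norm{x_t - \hat{x}_t} = \lambda\norm{\nabla f_\lambda(x_t)}$, and by definition $f_\lambda(x_t) = f(\hat{x}_t) + \tfrac{1}{2\lambda}\norm{x_t - \hat{x}_t}^2$. The argument rests on two elementary ``three-point'' inequalities. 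First, $\hat{x}_t$ minimizes $y \mapsto f(y) + \tfrac{1}{2\lambda}\norm{y - x_t}^2$, which is $(\lambda^{-1} - \beta - \mu)$-strongly convex because $f$ is weakly convex with modulus at most $\beta + \mu$ (in each setting this is the natural modulus: for PGM, $f = g + {m}$ with $g$ being $\beta$-weakly convex and ${m}$ being $\mu$-weakly convex, and analogously for PPA and PLM) and $\lambda^{-1} > \beta + \mu$; evaluating this function at $y = x_{t+1}$ gives a lower bound on $f(x_{t+1}) - f(\hat{x}_t)$:
\[
f(x_{t+1}) - f(\hat{x}_t) \ge \tfrac{1}{2\lambda}\norm{x_t - \hat{x}_t}^2 - \tfrac{1}{2\lambda}\norm{x_{t+1} - x_t}^2 + \tfrac{\lambda^{-1} - \beta - \mu}{2}\norm{x_{t+1} - \hat{x}_t}^2 .
\]
Second, $x_{t+1}$ minimizes $y \mapsto f_{x_t}(y) + \tfrac{1}{2\eta}\norm{y - x_t}^2$, which is $(\eta^{-1} - \mu)$-strongly convex since $f_{x_t}$ is $\mu$-weakly convex; comparing this function's value at $x_{t+1}$ with its value at $\hat{x}_t$ and converting between $f_{x_t}$ and $f$ via the two-sided bound $|f - f_{x_t}| \le \tfrac{\beta}{2}\norm{\cdot - x_t}^2$ (as a lower bound on $f_{x_t}$ at the minimizer $x_{t+1}$, as an upper bound at the comparison point $\hat{x}_t$) yields, using $\eta^{-1} = L$,
\[
f(\hat{x}_t) - f(x_{t+1}) \ge \tfrac{L - \beta}{2}\norm{x_{t+1} - x_t}^2 + \tfrac{L - \mu}{2}\norm{\hat{x}_t - x_{t+1}}^2 - \tfrac{L + \beta}{2}\norm{x_t - \hat{x}_t}^2 .
\]

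Adding these two displays (so that the $f(x_{t+1}) - f(\hat{x}_t)$ terms cancel) and collecting coefficients gives a single master inequality,
\[
\tfrac{L - \beta - \lambda^{-1}}{2}\norm{x_{t+1} - x_t}^2 + \tfrac{L + \lambda^{-1} - \beta - 2\mu}{2}\norm{\hat{x}_t - x_{t+1}}^2 \le \tfrac{L + \beta - \lambda^{-1}}{2}\norm{x_t - \hat{x}_t}^2 ,
\]
from which both conclusions follow. For \eqref{lemma-4.7-eq-1}, discard the (nonnegative) middle term, rearrange, and substitute $\norm{x_t - \hat{x}_t}^2 = \lambda^2\norm{\nabla f_\lambda(x_t)}^2$; the resulting ratio $\tfrac{L - \lambda^{-1} + \beta}{L - \lambda^{-1} - \beta}\lambda^2$ equals $\theta_1\eta^2$. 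For \eqref{eq:suff_decrease}, discard instead the first term to obtain $\norm{\hat{x}_t - x_{t+1}}^2 \le \tfrac{L + \beta - \lambda^{-1}}{L + \lambda^{-1} - \beta - 2\mu}\norm{x_t - \hat{x}_t}^2$, and separately bound the envelope at the new iterate by taking $y = \hat{x}_t$ in the definition of the Moreau envelope at $x_{t+1}$,
\[
f_\lambda(x_{t+1}) \le f(\hat{x}_t) + \tfrac{1}{2\lambda}\norm{\hat{x}_t - x_{t+1}}^2 = f_\lambda(x_t) - \tfrac{1}{2\lambda}\big(\norm{x_t - \hat{x}_t}^2 - \norm{\hat{x}_t - x_{t+1}}^2\big);
\]
combining the last two facts and converting $\norm{x_t - \hat{x}_t}^2$ back to $\norm{\nabla f_\lambda(x_t)}^2$ yields a per-step decrease of at least $\tfrac{\theta_2}{2L}\norm{\nabla f_\lambda(x_t)}^2$, which is \eqref{eq:suff_decrease}.

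The substantive part is the sign bookkeeping, and it is precisely here that the parameter prescription of Theorem~\ref{thm:main} enters: one must have $L - \beta - \lambda^{-1} > 0$, $L + \lambda^{-1} - \beta - 2\mu > 0$, $\lambda^{-1} - \beta - \mu > 0$, and $L - \mu > 0$, each of which follows---with slack to spare, which the subsequent lemmas exploit---from $L > \tfrac{7}{2}\beta + 3\mu$ together with $\lambda^{-1} = \tfrac{1}{2}L + \tfrac{1}{4}(\beta + 2\mu)$, so I would verify all four at the outset before dropping or dividing by any of these quantities. The one genuinely delicate step is the second three-point inequality: the model-error bound must be invoked in the correct direction at each of the two points, and $f_{x_t}$ must be $\mu$-weakly convex \emph{globally} (which it is for each model in Table~\ref{table:update_map}, being a convex function of $y$ plus the $\mu$-weakly convex term ${m}$), so that the subproblem's strong-convexity inequality holds at every iterate and not merely near a critical point. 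Note that no smoothness of the active manifold is required for this lemma; that hypothesis is needed only later, in the analysis of $\nabla S$ and its eigenvalues.
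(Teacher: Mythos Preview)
Your proposal is correct and follows essentially the same route as the paper. The paper isolates the ``master inequality'' as a separate lemma (Lemma~\ref{lemma:triangle}) proved from the same two strong-convexity comparisons and the same two-sided model-error bound you describe; it then derives \eqref{lemma-4.7-eq-1} by dropping the $\norm{\hat{x}_t - x_{t+1}}^2$ term and \eqref{eq:suff_decrease} by combining the master inequality with the envelope bound $f_\lambda(x_{t+1}) \le f(\hat{x}_t) + \tfrac{1}{2\lambda}\norm{\hat{x}_t - x_{t+1}}^2$, exactly as you outline. The only cosmetic difference is that the paper keeps the intermediate step partly in terms of $f_{x_t}$ before applying the model-error bound, whereas you pass to $f$ immediately; and for \eqref{eq:suff_decrease} you drop the $\norm{x_{t+1}-x_t}^2$ term first, which in fact yields the stated $\theta_2$ cleanly (indeed with a factor of two to spare).
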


\begin{lemma}[Escaping saddle point]\label{lemma:ESP}
Suppose Assumptions \ref{assump:function} and \ref{assump:LipschitzS} hold for $f(\cdot)$, and $\tilde{x}$ satisfies $\norm{\nabla f_\lambda(\tilde{x})} < \epsilon$ , and $\lambda_{\max}(\nabla S(\tilde{x})) > 1 + \eta\sqrt{\rho\epsilon}$ . Let $x_0 = \tilde{x}  + \eta\xi \quad(\xi \sim \text{Uniform}(\BB_0(r)))$. Run the update $x_{t+1} = S(x_t)$ as in \eqref{ppa-alg}, \eqref{pgm-alg} and \eqref{plm-alg} starting from $x_0$. We have
\[
\PP(f_\lambda(x_{\utime}) - f_\lambda(\tilde{x}) \le -\ufun/2) \ge 1 - \frac{L\sqrt{d}}{\sqrt{\rho\epsilon}} \cdot \iota^2 2^{8-\iota}, 
\]
where $x_{\utime}$ is the $\utime^{th}$ proximal iterate starting from $x_0$.
\end{lemma}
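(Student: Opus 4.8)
The plan is to reduce the nonsmooth escaping statement to the smooth escaping analysis of Jin \etal \cite{jin2019nonconvex} applied to the Moreau envelope $f_\lambda$, using the two structural facts already established: (i) on a neighborhood $\U$ of the critical point, the proximal operator $S(\cdot)$ is $C^1$-smooth with $\eta\rho$-Lipschitz Jacobian (Assumption \ref{assump:LipschitzS}), and the iteration $x_{t+1}=S(x_t)$ is a bona fide smooth dynamical system there; and (ii) by Lemma \ref{lemma:prox_decrease}, each step of this iteration decreases $f_\lambda$ by at least $\frac{\theta_2}{2L}\norm{\nabla f_\lambda(x_t)}^2$, which is precisely the ``descent lemma'' shape that drives the localization argument in \cite{jin2019nonconvex}. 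First I would observe that, because $\nabla f_\lambda(\tilde x)$ is small and $\lambda_{\max}(\nabla S(\tilde x))>1+\eta\sqrt{\rho\epsilon}$, the point $\tilde x$ lies near a critical point on the active manifold and inside $\U$, so the whole escaping trajectory of length $\utime$ can be assumed (by a standard ``improve-or-localize'' dichotomy, see below) to stay inside $\U$; otherwise it has already escaped in the sense that $f_\lambda$ dropped by $\ufun/2$, and we are done.

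The core of the argument is a coupling/two-point argument in the unstable eigendirection, exactly as in \cite{jin2019nonconvex}. Let $e$ be a unit eigenvector of $\nabla S(\tilde x)$ (or of the symmetrized operator; one must be slightly careful since $\nabla S$ need not be symmetric — see the obstacle paragraph) with eigenvalue $\gamma>1+\eta\sqrt{\rho\epsilon}$. Consider two perturbed initializations $x_0$ and $x_0'$ that differ only by a displacement $\mu_0 e$ along $e$, both drawn from the perturbation ball; I would track the difference $w_t := x_t - x_t'$ under the common map $S$. Using the Lipschitz bound on $\nabla S$, write $w_{t+1} = \nabla S(\tilde x)\, w_t + (\text{error})$, where the error is $O(\eta\rho)$ times quadratic terms in the distances to $\tilde x$; as long as the trajectory stays in a ball of radius $O(\uspace)$ around $\tilde x$, the $e$-component of $w_t$ grows geometrically like $\gamma^t$, so after $\utime = \frac{L}{\sqrt{\rho\epsilon}}\cdot\iota$ steps it has blown up past $2\uspace$. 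Hence at least one of the two trajectories must have left $\BB_{\tilde x}(\uspace)$; combined with the sufficient-decrease inequality \eqref{eq:suff_decrease} summed along the trajectory (the ``improve or localize'' lemma: either $f_\lambda$ decreased by $\ufun/2$, or $\sum_t\norm{x_{t+1}-x_t}^2$ is small, forcing the trajectory to stay localized), leaving the ball forces the function-value drop $f_\lambda(x_{\utime})-f_\lambda(\tilde x)\le -\ufun/2$. The final probabilistic step is the ``thin pancake'' / stuck-region argument: the set of perturbations $\xi$ for which escape fails is contained in a thin slab of width $\propto \gamma^{-\utime}\cdot(\text{something})$ perpendicular to $e$ inside $\BB_0(r)$, whose relative volume in the $d$-dimensional ball is at most $\frac{\text{width}}{r}\cdot\sqrt{d}$; plugging the parameter choices from \eqref{eq:para} and $\gamma^{\utime}\ge (1+\eta\sqrt{\rho\epsilon})^{L\iota/\sqrt{\rho\epsilon}}\gtrsim 2^{\iota}$ (using $\eta=1/L$) gives the stated failure probability $\frac{L\sqrt d}{\sqrt{\rho\epsilon}}\iota^2 2^{8-\iota}$.

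Concretely the steps, in order, are: (1) verify $\tilde x\in\U$ and set up constants so that $\BB_{\tilde x}(O(\uspace))\subseteq\U$, using $\epsilon<L^2/\rho$ to control radii; (2) state the improve-or-localize lemma from summing \eqref{eq:suff_decrease} and \eqref{lemma-4.7-eq-1}, concluding that either the claimed decrease holds or all iterates lie in $\BB_{\tilde x}(\uspace)$; (3) in the localized regime, linearize $x_{t+1}=S(x_t)$ about $\tilde x$, bound the deviation from the linear dynamics $y\mapsto \tilde x+\nabla S(\tilde x)(y-\tilde x)$ using Assumption \ref{assump:LipschitzS}, and run the two-point coupling along the top eigendirection to show the diameter of the image of the stuck region expands geometrically; (4) conclude the stuck region projects to a slab of width $\le r\cdot \gamma^{-\utime}\cdot\mathrm{poly}(\ldots)$ and bound its volume fraction; (5) substitute \eqref{eq:para} and simplify. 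The main obstacle I anticipate is step (3) for perturbed PGM and perturbed PLM: unlike the smooth GD case where $\nabla S = I-\eta\nabla^2 f$ is symmetric, here $\nabla S(\tilde x)$ is a general matrix that need not be symmetric or even diagonalizable, so ``the eigenvalue $>1$'' does not immediately give a clean geometric expansion in an orthogonal direction. I would handle this by passing to a real Jordan/Schur form of $\nabla S(\tilde x)$, or by working with the (generalized) eigenvector and controlling the norm equivalence constant between the eigenbasis norm and the Euclidean norm — this introduces an extra condition-number-type factor that must be absorbed into the constant $c$ inside $\iota$, and checking that this factor is bounded uniformly over critical points (which follows from the $C^2$ active manifold assumption and compactness near each critical point) is the delicate point. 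A secondary technical nuisance is that $S$ is only $C^1$ near $\tilde x$, so the second-order Taylor remainder must be expressed purely through the Lipschitz modulus of $\nabla S$ rather than a Hessian bound, exactly as permitted by Assumption \ref{assump:LipschitzS}.
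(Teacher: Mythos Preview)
Your proposal is correct and follows essentially the same route as the paper: the paper's proof also factors through an ``improve-or-localize'' lemma (obtained by summing \eqref{eq:suff_decrease} and \eqref{lemma-4.7-eq-1}) and a coupling-sequence lemma along the top eigendirection of $\nabla S(\tilde x)$, then bounds the stuck region as a thin slab of width $\eta\omega$ and computes its relative volume in $\BB_{\tilde x}(\eta r)$. Your worry about non-symmetry of $\nabla S(\tilde x)$ is well placed, but the paper's own coupling argument also tacitly uses $\norm{\nabla S(\tilde x)^{k}}\le(1+\eta\gamma)^{k}$ without addressing the non-symmetric case, so you are not missing any step present in the paper.
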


\begin{remark}
Inequality \eqref{eq:suff_decrease} guarantees the sufficient decrease of the Moreau envelope when $\norm{\nabla f_\lambda(x)}$ is large, which is deterministic. On the other hand, Lemma \ref{lemma:ESP} shows that, after random perturbation and running the proximal updates for $\utime$ iterations, the Moreau envelope has sufficient decrease with high probability when stuck in a saddle point. Since the Moreau envelope can be decreased by at most $f_\lambda(x_0) - f_\lambda^*$, it is straightforward to give a bound for those iterations that are not approximate local minimums.
\end{remark}

\section{Conclusion {and Future Directions}}
In this paper, we have presented a novel definition of $\epsilon$-approximate local minimum for nonsmooth weakly convex functions based on the recent work on active strict saddles \cite{davis2019active}. Following the idea of perturbed gradient descent method \cite{jin2019nonconvex}, we have proposed a unified algorithm framework for perturbed proximal algorithms, namely, perturbed proximal point, perturbed proximal gradient and  perturbed proximal linear algorithms for obtaining an $\epsilon$-approximate local minimum. {We have proved that the proposed perturbed proximal algorithms achieve a gradient computational cost of $\tilde{O}(\epsilon^{-2})$ for finding an $\epsilon$-approximate local minimum. With this novel characterization of local minimum for nonsmooth functions in hand, we give some possible future directions:}
 \begin{enumerate}
\item {Following Fang \etal \cite{fang2019sharp} for smooth problems, one can attempt to extend our algorithms to their stochastic versions for nonsmooth problems. It is interesting to analyze the complexity of these stochastic proximal algorithms for obtaining an $\epsilon$-approximate local minimum of nonsmooth weakly convex functions.} 
\item Recently, Fang \etal \cite{fang2018spider} combined the negative curvature search with variance reduction technique for stochastic algorithms to find a local minimum and reach a remarkable rate of $\tilde{O}(\epsilon^{-3})$. It will be interesting to adopt this technique for escaping saddle points for nonsmooth problems.
\item {Another line of research for finding local minimum of smooth functions is to apply second order method \cite{nesterov2006cubic}. It will be also interesting to investigate if second order proximal methods can find local minimum with better convergence rate. }
\end{enumerate}

\newpage
\bibliography{escapesaddle}
\bibliographystyle{plain}

\newpage
\appendix
\section{Preliminaries on nonsmooth optimization}

We review some key definitions and tools for nonsmooth nonconvex optimization.
\begin{definition}[Subdifferential and Subderivative \cite{rockafellar2009variational}]\label{defn:sub_diff}
Consider a function $f: \RR^d \to \RR\cup\{\infty\}$. The subdifferential of $f$ at $x$, denoted as $\partial f(x)$, consists of all vectors $v$ satisfying
\be
\bad
f(y) \ge f(x) + \langle v, x - y \rangle + o(\norm{x - y}) \quad \text{ as } y \to x.
\ead
\ee
The subderivative of function $f$ at $x$ in the direction $\bar{u} \in \RR^d$ is
\be
\bad
df(x)(\bar{u}) := \liminf_{t \to 0, u \to \bar{u}} \frac{f(x + tu) - f(x)}{t}.
\ead
\ee
The parabolic subderivative of $f$ at $x$ for $\bar{u} \in \text{dom } df (x)$ with respect to $\bar{w}$ is
\be
\bad
d^2f(x)(\bar{u}|\bar{w}) := \liminf_{t \to 0, w \to \bar{w}} \frac{f(x + t\bar{u} + \frac{1}{2}t^2w) - f(x) - df(x)(\bar{u})}{\frac{1}{2}t^2}.
\ead
\ee
\end{definition}

\section{Proofs of the Main Results}

\subsection{Proof of Lemma \ref{lem_prop_moreau}}
\begin{proof}
Claims 1 and 2 follow from \cite[Lemma 2.5]{davis2019active}. Now we prove Claim 3.
By \eqref{eqn:grad_moreau} and the observation that function $y \mapsto f(y) + \frac{1}{2\lambda}\norm{y-x}^2$, we know that all critical points of $f_\lambda(\cdot)$ and $f(\cdot)$ coincide. Moreover, if $\bar{x}$ is a local minimum of $f_\lambda(\cdot)$, then \eqref{eqn:grad_moreau} implies that 
$\bar{x}=\prox_{\lambda f}(\bar{x})$. Therefore, 
\[
f_\lambda(\bar{x}) = \min_{y \in \RR^d} f(y) + \frac{1}{2\lambda}\norm{y-\bar{x}}^2 = f(\prox_{\lambda f}(\bar{x})) + \frac{1}{2\lambda}\norm{\prox_{\lambda f}(\bar{x})-\bar{x}}^2 = f(\bar{x}).
\]
Hence, if $\bar{x}$ is a local minimum of $f_\lambda(\cdot)$, then there exists some constant $\delta > 0 $ such that for any ${x}$ in the neighborhood of $\bar{x}$: $\U = \{x |0 < \norm{x - \bar{x} } \le \delta\},$ we have $f_\lambda({x}) \ge f_\lambda(\bar{x})$. This further implies that
\[f(x)\geq f_\lambda(x) \geq f_\lambda(\bar{x}) = f(\bar{x}),\]
where the first inequality is due to the fact that $f_\lambda(\cdot)$ is always a lower bound to $f(\cdot)$.
\end{proof} 

\subsection{Proof of Theorem \ref{thm:main}}

\begin{proof}
The proof largely follows \cite{jin2019nonconvex}, which proved similar results for the smooth case. We need to extend the results to nonsmooth case.
We use the Moreau envelope to measure the first-order stationarity. We show that the three perturbed proximal algorithms make sufficient decrease on the norm of the Moreau envelope with high probability if the current iterate is not a local minimum. For $x \in \RR^d$, we have three possible cases:
\begin{enumerate}
\item When $\norm{\nabla f_\lambda(x)} > \epsilon$, we show that the regular proximal update can guarantee a sufficient decrease on the Moreau envelope $f_\lambda$.
\item When  $\norm{\nabla f_\lambda(x)} \le \epsilon$ and $\lambda_{\max}(\nabla S(x)) > 1 + \eta\sqrt{\rho\epsilon}$, $x$ is an unstable fixed point, which corresponds to a strict saddle point or a local maximum. In this case, we show that running a perturbed proximal update will guarantee a sufficient decrease in the next $\utime$ iterations with high probability.
\item When  $\norm{\nabla f_\lambda(x)} \le \epsilon$ and $\lambda_{\max}(\nabla S(x)) \le 1 + \eta\sqrt{\rho\epsilon}$, we have reached an approximate local minimum.
\end{enumerate}
The case (i) can be proved by Lemma \ref{lemma:prox_decrease}. When $\norm{\nabla f_\lambda(x_t)} > \epsilon$, from \eqref{eq:suff_decrease} we have
\[
f_\lambda(x_{t+1}) \le f_\lambda(x_{t}) -\frac{\theta_2}{2L} \norm{\nabla f_\lambda(x_t)}^2 \le f_\lambda(x_{t}) -\frac{\theta_2\epsilon^2}{2L}.
\]
That is, there is a sufficient reduction of $f_\lambda(x_t)$ after one proximal update. Note that $f_\lambda$ can be decreased at most $f_\lambda(x_0) - f_\lambda^*$. Therefore, the number of iterations in which case (i) happens is at most $\frac{(f_\lambda(x_0) - f_\lambda^*)}{\theta_2 \epsilon^2/(2L)} \leq T/4$, where $T$ is defined in \eqref{main-thm-def-T}. 

For case (ii), from Lemma \ref{lemma:ESP} we know that with high probability, adding a stochastic perturbation can guarantee a decrease of the Moreau envelope by at least $\ufun /2$ after $\utime$ iterations. 
Therefore, the number of iterations in which case (ii) happens is at most $\frac{(f_\lambda(x_0) - f_\lambda^*)}{\ufun / (2\utime)}$. 
Choose a large enough absolute constant $c$ such that:
\[
(T L\sqrt{d} /\sqrt{\rho\epsilon} ) \cdot\iota^2  2^{8-\iota} \le \delta,
\]
where $T$ is defined in \eqref{main-thm-def-T}, and $\iota$ is defined in \eqref{main-thm-def-iota}.
We see that with probability $1 - \delta$, we add the stochastic perturbation in at most  $\frac{(f_\lambda(x_0) - f_\lambda^*)\utime}{\ufun / 2} \leq T/4$ iterations. That is, the number of iterations in which case (ii) happens is at most $T/4$.

Finally, we conclude that case (iii) happens in at least $T/2$ iterations. That is, at least $T/2$ of the iterates in the first $T$ iterations must be $\epsilon$-approximate local minimum. As defined in \eqref{main-thm-def-T}, the total number of iterations $T$ relies on the problem dimension $d$ polylogarithmically.
\end{proof}

\subsection{Proof of Lemma \ref{lemma:prox_decrease}}

To prove Lemma \ref{lemma:prox_decrease}, we need to prove the following lemma first.
\begin{lemma}\label{lemma:triangle}
Suppose Assumption \ref{assump:function} holds and parameters $L$ and $\lambda$ are chosen as in \eqref{eq:para}, we have the following three point inequality:
\begin{align}\label{eq:triangle}
    & (L - \lambda^{-1}-\beta)\norm{x_{t+1} - x_t}^2 \\
\le & ( L - \lambda^{-1}+ \beta)\norm{\bar{x}_{t} - x_t}^2 - (L + \lambda^{-1} - \beta - 2\mu) \norm{\bar{x}_t - x_{t+1}}^2,\nonumber
\end{align}
where $\bar{x}_t := \prox_{\lambda f}(x_t)$.
\end{lemma}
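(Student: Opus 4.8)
The plan is to exploit the optimality condition of the proximal subproblem together with the two-sided model approximation in Assumption \ref{assump:function}. Recall that $x_{t+1} = S(x_t)$ is, by definition of the three proximal operators and the model functions in Table \ref{table:update_map}, the minimizer of $y \mapsto f_{x_t}(y) + \frac{1}{2\eta}\norm{y - x_t}^2$ with $\eta = 1/L$, and $\bar{x}_t = \prox_{\lambda f}(x_t)$ is the minimizer of $y \mapsto f(y) + \frac{1}{2\lambda}\norm{y - x_t}^2$. The key point is that both objectives are strongly convex: since $f_{x_t}$ is $\mu$-weakly convex, $f_{x_t}(\cdot) + \frac{L}{2}\norm{\cdot - x_t}^2$ is $(L-\mu)$-strongly convex; since $f$ is $\ell$-weakly convex and $\lambda < \ell^{-1}$, the map $f(\cdot) + \frac{1}{2\lambda}\norm{\cdot - x_t}^2$ is $(\lambda^{-1} - \ell)$-strongly convex. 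I would write the standard ``minimizer of a strongly convex function beats any competitor'' inequality for each subproblem, evaluating the $f_{x_t}$-subproblem at the competitor $\bar x_t$ and (if needed) the $f$-subproblem at $x_{t+1}$.

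Concretely, first I would write, for the model subproblem minimized at $x_{t+1}$ with modulus of strong convexity $L - \mu$,
\[
f_{x_t}(x_{t+1}) + \frac{L}{2}\norm{x_{t+1} - x_t}^2 + \frac{L-\mu}{2}\norm{\bar x_t - x_{t+1}}^2 \le f_{x_t}(\bar x_t) + \frac{L}{2}\norm{\bar x_t - x_t}^2,
\]
and similarly, for the Moreau subproblem minimized at $\bar x_t$ with modulus $\lambda^{-1} - \mu$ (using $\ell \le \mu$ or just the weak-convexity constant available; one has to be careful which modulus is valid here — see below),
\[
f(\bar x_t) + \frac{1}{2\lambda}\norm{\bar x_t - x_t}^2 + \frac{\lambda^{-1}-\mu}{2}\norm{\bar x_t - x_{t+1}}^2 \le f(x_{t+1}) + \frac{1}{2\lambda}\norm{x_{t+1} - x_t}^2.
\]
Then I would add these two inequalities and use the model bound $\abs{f(y) - f_{x_t}(y)} \le \frac{\beta}{2}\norm{y - x_t}^2$ twice, at $y = x_{t+1}$ and $y = \bar x_t$, to eliminate the difference $f_{x_t}(x_{t+1}) - f(x_{t+1}) + f(\bar x_t) - f_{x_t}(\bar x_t)$ in terms of $\frac{\beta}{2}\norm{x_{t+1}-x_t}^2 + \frac{\beta}{2}\norm{\bar x_t - x_t}^2$. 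Collecting the coefficients of $\norm{x_{t+1}-x_t}^2$, $\norm{\bar x_t - x_t}^2$, and $\norm{\bar x_t - x_{t+1}}^2$ and rearranging should yield exactly \eqref{eq:triangle}; the choice $\lambda = (\frac12 L + \frac14(\beta + 2\mu))^{-1}$ and $L > \frac72\beta + 3\mu$ in \eqref{eq:para} is presumably calibrated so that the coefficient $L - \lambda^{-1} - \beta$ on the left is positive and the algebra closes.

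The main obstacle I anticipate is bookkeeping the strong-convexity moduli correctly and making sure every coefficient that needs to be positive actually is, under the parameter choice \eqref{eq:para}. In particular I need to check: (a) which weak-convexity constant governs $f$ versus the model $f_{x_t}$ — the statement uses $\mu$ for the model and the $f$-subproblem should be handled via $\lambda \in (0,\ell^{-1})$, so I may need to absorb $\ell$ into the estimates or note $\ell \le \mu$ in the relevant regime; (b) that $L - \lambda^{-1} - \beta > 0$ and $L + \lambda^{-1} - \beta - 2\mu > 0$, which are needed for the inequality to be meaningful and are what the condition $L > \frac72\beta + 3\mu$ secures; (c) that I have not double-counted the cross term $\norm{\bar x_t - x_{t+1}}^2$, whose coefficient on the right-hand side, $\lambda^{-1} + L - \beta - 2\mu$, should come out as the sum of the two strong-convexity contributions $\frac{L-\mu}{1} $ and $\frac{\lambda^{-1}-\mu}{1}$ minus the $\beta$ correction — I would verify this matches. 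Once the coefficient arithmetic is pinned down, the rest is a direct substitution and rearrangement with no further conceptual content.
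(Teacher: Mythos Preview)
Your approach is essentially the paper's proof: two strong-convexity ``minimizer beats competitor'' inequalities (one for the model subproblem at $x_{t+1}$, one for the Moreau subproblem at $\bar x_t$), combined with the two-sided model bound at $y=x_{t+1}$ and $y=\bar x_t$, then collected into \eqref{eq:triangle}. The order of operations differs slightly (the paper applies the model bound between the two strong-convexity steps rather than after adding them), but the content is identical.

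One correction to your bookkeeping: the strong-convexity modulus for the Moreau subproblem is $\lambda^{-1}-\beta-\mu$, not $\lambda^{-1}-\mu$. Under Assumption~\ref{assump:function}, $f$ is only known to be $(\beta+\mu)$-weakly convex (this follows from $f(y)\ge f_x(y)-\tfrac{\beta}{2}\norm{y-x}^2$ together with the $\mu$-weak convexity of $f_x$), so the map $y\mapsto f(y)+\tfrac{1}{2\lambda}\norm{y-x_t}^2$ is $(\lambda^{-1}-\beta-\mu)$-strongly convex. You correctly anticipated trouble here in your point~(a), but your point~(c) misattributes the extra $-\beta$ in the coefficient $L+\lambda^{-1}-\beta-2\mu$ to the model-error correction; in fact the model bounds only touch the $\norm{x_{t+1}-x_t}^2$ and $\norm{\bar x_t - x_t}^2$ terms, and the $-\beta$ on $\norm{\bar x_t - x_{t+1}}^2$ comes solely from the weak-convexity constant of $f$. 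With that single fix your arithmetic closes exactly.
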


\begin{proof}
Under the Assumption \ref{assump:function}, we know that function $f(\cdot)$ is $(\beta + \mu)$-weakly convex. With parameter $\eta$ chosen in \eqref{eq:para} and the model functions $f_x$ defined in Table \ref{table:update_map}, proximal updates in \eqref{ppa-alg}, \eqref{pgm-alg} and \eqref{plm-alg} can be written as:
\[
x_{t+1} = \argmin_{y\in \RR^d} \left\{f_{x_t}(y) + \frac{L}{2}\norm{y-x}^2\right\}.
\]
From \eqref{eq:para} we have $\lambda^{-1} = \frac{1}{2}L + \frac{1}{4}(\beta + 2\mu) > \beta + \mu$, and therefore function $y \mapsto f(y) + \frac{\lambda^{-1}}{2}\norm{y - x_t}^2$ is $(\lambda^{-1} - \beta- \mu)$-strongly convex. We have 
\be \label{strongly_moreau}
\frac{\lambda^{-1} - \beta - \mu}{2} \norm{\bar{x}_t - x_{t+1}}^2 \le \left(f(x_{t+1}) + \frac{\lambda^{-1}}{2}\norm{x_{t+1} - x_t}^2\right) -\left (f(\bar{x}_{t}) + \frac{\lambda^{-1}}{2}\norm{\bar{x}_{t} - x_t}^2\right).
\ee
Using part 2 of Assumption \ref{assump:function}, we have
\be \label{double_side}
f(x_{t+1}) \le f_{x_t}(x_{t+1}) + \frac{\beta}{2} \norm{x_{t+1} - x_t}^2, \text{ and }  -f(\bar{x}_t) \le -f_{x_t}(\bar{x}_t) + \frac{\beta}{2} \norm{\bar{x}_t - x_t}^2.
\ee
Combining \eqref{double_side} and \eqref{strongly_moreau} yields
\be \label{eq:strongly1}
\frac{\lambda^{-1} - \beta - \mu}{2} \norm{\bar{x}_t - x_{t+1}}^2 \le f_{x_t}(x_{t+1}) -  f_{x_t}(\bar{x}_{t}) +  \frac{\lambda^{-1}+\beta}{2}\norm{x_{t+1} - x_t}^2 -  \frac{\lambda^{-1}- \beta}{2}\norm{\bar{x}_{t} - x_t}^2.
\ee
On the other hand, we have $L > \mu$ and function $y \mapsto f_{x_t}(y) + \frac{L}{2}\norm{y - x_t}^2$ is $(L - \mu)$-strongly convex and $x_{t+1}$ is its minimizer. So we have
\be\label{eq:strongly2}
\bad
 f_{x_t}(x_{t+1}) -  f_{x_t}(\bar{x}_{t}) \le  \frac{L}{2}\norm{\bar{x}_{t} - x_t}^2 -  \frac{L}{2}\norm{x_{t+1} - x_t}^2 -  \frac{L - \mu}{2}\norm{x_{t+1} - \bar{x}_t}^2.
\ead
\ee
Combining \eqref{eq:strongly1} and \eqref{eq:strongly2}, we have
\begin{align*}
\frac{\lambda^{-1} - \beta - \mu}{2} \norm{\bar{x}_t - x_{t+1}}^2 &\le  \frac{L}{2}\norm{\bar{x}_{t} - x_t}^2 -  \frac{L}{2}\norm{x_{t+1} - x_t}^2 -  \frac{L - \mu}{2}\norm{x_{t+1} - \bar{x}_t}^2\\
&  +  \frac{\lambda^{-1}+\beta}{2}\norm{x_{t+1} - x_t}^2 -  \frac{\lambda^{-1}- \beta}{2}\norm{\bar{x}_{t} - x_t}^2,
\end{align*}
which leads to \eqref{eq:triangle} and completes the proof of Lemma \ref{lemma:triangle}.
\end{proof}

We now prove Lemma \ref{lemma:prox_decrease}.

\begin{proof}
We note that with $L > \frac{5}{2}\beta + \mu$, and $\lambda^{-1} = \frac{1}{2}L + \frac{1}{4}(\beta + 2\mu)$, we have the coefficients $L - \lambda^{-1}-\beta > 0$,  $L - \lambda^{-1}+ \beta > 0$, and $L + \lambda^{-1} - \beta - 2\mu > 0$. Thus, \eqref{eq:triangle} implies
\[
\norm{x_{t+1} - x_t}^2 \le  \frac{ L - \lambda^{-1}+ \beta}{L - \lambda^{-1}-\beta}\norm{\bar{x}_{t} - x_t}^2 = \frac{ L - \lambda^{-1}+ \beta}{L - \lambda^{-1}-\beta}\lambda^2\norm{\nabla f_\lambda(x_t)}^2 = \theta_1 \eta^2\norm{\nabla f_\lambda(x_t)}^2,
\]
where $\theta_1 = \frac{ (L - \lambda^{-1}+ \beta)\lambda^2L^2}{L - \lambda^{-1}-\beta} > 0$ and the first equality is due to \eqref{eqn:grad_moreau}. 
This completes the proof of \eqref{lemma-4.7-eq-1}.

Next, we prove \eqref{eq:suff_decrease}. Note that
\be \label{moreau_decre}
f_\lambda(x_{t+1}) = f(\bar{x}_{t+1}) + \frac{\lambda^{-1}}{2}\norm{\bar{x}_{t+1} - x_{t+1}}^2\le f(\bar{x}_{t}) + \frac{\lambda^{-1}}{2}\norm{\bar{x}_{t} - x_{t+1}}^2,
\ee
where the inequality is due to $\bar{x}_{t+1}:=\prox_{\lambda f}(x_{t+1})$. 
Combining \eqref{moreau_decre} and \eqref{eq:triangle} yields
\be \label{eq:triangle2}
\bad
\norm{\bar{x}_t - x_{t+1}}^2 &\le  \frac{ L - \lambda^{-1}+ \beta}{L + \lambda^{-1} - \beta - 2\mu}\norm{\bar{x}_{t} - x_t}^2 -  \frac{L - \lambda^{-1}-\beta}{L + \lambda^{-1} - \beta - 2\mu}\norm{x_{t+1} - x_t}^2\\
& = \norm{\bar{x}_{t} - x_t}^2 - \frac{2(\lambda^{-1}- \beta - \mu)}{L + \lambda^{-1} - \beta - 2\mu}\norm{\bar{x}_{t} - x_t}^2  -  \frac{(L - \lambda^{-1}-\beta) \lambda^2}{L + \lambda^{-1} - \beta - 2\mu}\norm{\nabla f_\lambda(x_t)}^2\\
& \le \norm{\bar{x}_{t} - x_t}^2 -  \frac{(L - \lambda^{-1}-\beta) \lambda^2}{L + \lambda^{-1} - \beta - 2\mu}\norm{\nabla f_\lambda(x_t)}^2.
\ead
\ee
Substituting \eqref{eq:triangle2} into \eqref{moreau_decre} gives
\begin{align*}
f_\lambda(x_{t+1}) &\le f(\bar{x}_{t}) + \frac{\lambda^{-1}}{2}\norm{\bar{x}_{t} - x_{t}}^2 - \frac{\lambda^{-1}}{2}  \frac{(L - \lambda^{-1}-\beta) \lambda^2}{L + \lambda^{-1} - \beta - 2\mu}\norm{\nabla f_\lambda(x_t)}^2\\
& = f_\lambda(x_{t}) -  \frac{(L - \lambda^{-1}-\beta) \lambda}{2(L + \lambda^{-1} - \beta - 2\mu)}\norm{\nabla f_\lambda(x_t)}^2\\
& = f_\lambda(x_{t}) -  \frac{\theta_2}{2L}\norm{\nabla f_\lambda(x_t)}^2,
\end{align*}
where $\theta_2 = \frac{(L - \lambda^{-1}-\beta) \lambda L}{(L + \lambda^{-1} - \beta - 2\mu)} > 0$, which completes the proof.
\end{proof}

\subsection{Proof of Lemma \ref{lemma:ESP}}

The main idea of Lemma \ref{lemma:ESP} is again from \cite{jin2019nonconvex} for the perturbed gradient descent method, and the proof also largely follows from \cite{jin2019nonconvex}. The ``improve or localize'' idea shows that if the Moreau envelope does not decrease a lot, then the sequence $x_t$ must stay in a neighborhood of $x_0$. Utilizing this idea, we show that the escaping area of the perturbation ball is small. We summarize the whole idea in the following two lemmas.

The first Lemma introduces the idea of improve or localize:
\begin{lemma}[Improve or Localize]\label{lemma:IOL}
Under the setting of Lemma~\ref{lemma:prox_decrease}, for any $t \ge \tau > 0$:
\[
\norm{x_\tau - x_0} \le \sqrt{\frac{\theta_1}{\theta_2}\cdot 2\eta t(f_\lambda(x_0) - f_\lambda(x_t))}.
\]
\end{lemma}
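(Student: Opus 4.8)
The plan is a standard telescoping argument that chains together the two conclusions of Lemma~\ref{lemma:prox_decrease}. Throughout, the iterates $x_0, x_1, \dots, x_t$ are the pure proximal updates $x_{s+1} = S(x_s)$; no perturbation is injected in between, since the guard $t - t_{\text{perturb}} > \utime$ in Algorithm~\ref{algo:PPA} ensures that once a perturbation has been added, the next $\utime$ steps are unperturbed, so Lemma~\ref{lemma:prox_decrease} applies verbatim to every step. First I would rewrite the sufficient-decrease inequality \eqref{eq:suff_decrease} as a bound on the gradient norm: since $\eta = 1/L$ we have $\tfrac{\theta_2}{2L} = \tfrac{\theta_2\eta}{2}$, hence $\norm{\nabla f_\lambda(x_s)}^2 \le \tfrac{2}{\theta_2\eta}\big(f_\lambda(x_s) - f_\lambda(x_{s+1})\big)$ for every $s \ge 0$. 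Feeding this into the step-size bound \eqref{lemma-4.7-eq-1} gives $\norm{x_{s+1}-x_s}^2 \le \theta_1\eta^2\norm{\nabla f_\lambda(x_s)}^2 \le \tfrac{2\theta_1\eta}{\theta_2}\big(f_\lambda(x_s) - f_\lambda(x_{s+1})\big)$.

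Next I would apply the triangle inequality followed by Cauchy--Schwarz: $\norm{x_\tau - x_0} \le \sum_{s=0}^{\tau-1}\norm{x_{s+1}-x_s} \le \sqrt{\tau}\big(\sum_{s=0}^{\tau-1}\norm{x_{s+1}-x_s}^2\big)^{1/2}$, and the sum under the square root telescopes to $\tfrac{2\theta_1\eta}{\theta_2}\big(f_\lambda(x_0) - f_\lambda(x_\tau)\big)$. Finally, to upgrade this to the stated bound with $t$ and $x_t$ in place of $\tau$ and $x_\tau$, I would use that \eqref{eq:suff_decrease} forces $\{f_\lambda(x_s)\}$ to be nonincreasing, so for $\tau \le t$ we have $f_\lambda(x_0) - f_\lambda(x_\tau) \le f_\lambda(x_0) - f_\lambda(x_t)$, together with $\sqrt{\tau} \le \sqrt{t}$; combining the two yields $\norm{x_\tau - x_0} \le \sqrt{\tfrac{\theta_1}{\theta_2}\cdot 2\eta t\,(f_\lambda(x_0) - f_\lambda(x_t))}$, which is exactly the claim.

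There is no genuine obstacle in this lemma — it is pure bookkeeping — so the only things to be careful about are: (i) that all the constants appearing as denominators ($\theta_1, \theta_2$, and $L - \lambda^{-1} - \beta$, $L + \lambda^{-1} - \beta - 2\mu$ hidden inside them) are strictly positive under the parameter choice \eqref{eq:para}, which was already verified at the start of the proof of Lemma~\ref{lemma:prox_decrease}; and (ii) that the monotonicity of $f_\lambda$ along the iterates, which is what lets us pass from $f_\lambda(x_\tau)$ to $f_\lambda(x_t)$, is itself an immediate consequence of \eqref{eq:suff_decrease} rather than an extra assumption. With those points noted, the estimate follows.
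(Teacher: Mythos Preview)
Your proposal is correct and follows essentially the same approach as the paper: triangle inequality, Cauchy--Schwarz, then the two estimates of Lemma~\ref{lemma:prox_decrease} combined into a telescoping sum. The only cosmetic difference is that the paper sums directly up to $t$ (absorbing the extra nonnegative terms), whereas you sum to $\tau$ and then upgrade via monotonicity of $f_\lambda(x_s)$; both are equivalent.
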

\begin{proof} We have
\begin{align*}
\norm{x_\tau - x_0} &\le \sum_{\tau = 1}^{t}\norm{x_\tau - x_{\tau-1}} \le \left[t  \sum_{\tau = 1}^{t}\norm{x_\tau - x_{\tau-1}}^2\right]^{1/2}\\
& \le \left[\theta_1 \eta^2 t  \sum_{\tau = 1}^{t}\norm{\nabla f_\lambda(x_{\tau-1})}^2\right]^{1/2} \le \sqrt{\frac{\theta_1}{\theta_2}\cdot 2\eta t (f_\lambda(x_0) - f_\lambda(x_t)) },
\end{align*}
where the first inequality comes from the triangle inequality, the second inequality is due to $ (\sum_{\tau = 1}^{t}\norm{a_\tau})^2 \le t  \sum_{\tau = 1}^{t}\norm{a_\tau}^2$, and the last two inequalities are from Lemma \ref{lemma:prox_decrease}.
\end{proof}

The second lemma shows that when two coupling sequences initiated along the maximal eigen-direction with a small distance, at least one of them can make sufficient decrease on the Moreau envelope.

\begin{lemma}[Coupling sequence]\label{lemma:CS}
Suppose $f$ satisfies Assumptions  \ref{assump:function} and \ref{assump:LipschitzS}, $\tilde{x}$ satisfies $\norm{\nabla f_\lambda(\tilde{x})} < \epsilon$, and $\lambda_{\max}(\nabla S(\tilde{x})) > 1 + \eta\sqrt{\rho\epsilon}$. Let $\{x_t\}, \{x'_t\}$ be two proximal update sequences that satisfy: (i) $\max\{\norm{x_0 - \tilde{x}},\norm{x'_0 - \tilde{x}} \} \le \eta r$; and (ii) $x_0 - x'_0 = \eta r_0 \textbf{e}_1$, where $\textbf{e}_1$ is the eigenvector corresponding to the largest eigenvalue of $\nabla S(\tilde{x})$ and $r_0 > \omega := 2^{2-\iota}L\uspace$. Then:
$$\min\{ f_\lambda(x_{\utime}) - f_\lambda(x_{0}), f_\lambda(x'_{\utime}) - f_\lambda(x'_{0}) \} \le - \ufun.$$
\end{lemma}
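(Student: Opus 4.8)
\textbf{Proof proposal for Lemma \ref{lemma:CS} (Coupling sequence).}

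The plan is to follow the ``coupling sequence'' argument of Jin \etal \cite{jin2019nonconvex}, transported to the nonsmooth setting via the proximal operator $S(\cdot)$ and the Moreau envelope $f_\lambda$. The key structural fact we exploit is that, by Theorem \ref{thm:Davis-3-theorems} and Assumption \ref{assump:LipschitzS}, the update map $S$ is $C^1$-smooth with $\eta\rho$-Lipschitz Jacobian in a neighborhood $\U$ of the strict saddle $\tilde x$, and that $\nabla S(\tilde x)$ has a real eigenvalue $\gamma := \lambda_{\max}(\nabla S(\tilde x)) > 1 + \eta\sqrt{\rho\epsilon}$ with unit eigenvector $\mathbf{e}_1$. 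We argue by contradiction: suppose that both sequences fail to decrease, i.e. $f_\lambda(x_{\utime}) - f_\lambda(x_0) > -\ufun$ and $f_\lambda(x'_{\utime}) - f_\lambda(x'_0) > -\ufun$.

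First I would invoke the ``improve or localize'' Lemma \ref{lemma:IOL}: combined with $\max\{\norm{x_0-\tilde x},\norm{x'_0-\tilde x}\}\le \eta r$ and the failure-to-decrease hypothesis, this confines both trajectories to a ball of radius $O(\sqrt{(\theta_1/\theta_2)\eta\utime \ufun})$ around $\tilde x$; by the choice of parameters in \eqref{eq:para} this radius is on the order of $\uspace$, so in particular both sequences stay inside $\U$ for all $t \le \utime$ and all the Jacobian/Lipschitz estimates apply. Second, I would track the difference sequence $v_t := x_t - x'_t$. Writing the proximal update via the mean-value form $S(x_t) - S(x'_t) = \big(\int_0^1 \nabla S(x'_t + s v_t)\, ds\big) v_t$, and splitting $\nabla S(\cdot) = \nabla S(\tilde x) + E_t$ where $\norm{E_t} \le \eta\rho \cdot O(\uspace) \le O(\sqrt{\rho\epsilon})\cdot\eta$ by Assumption \ref{assump:LipschitzS} and the localization bound, one gets a recursion $v_{t+1} = (\nabla S(\tilde x) + \Delta_t) v_t$ with $\norm{\Delta_t}$ small relative to the spectral gap $\gamma - 1 > \eta\sqrt{\rho\epsilon}$. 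Since $v_0 = \eta r_0\mathbf{e}_1$ is aligned with the top eigendirection, a standard induction (the matrix-perturbation / ``stuck near a saddle implies exponential growth along $\mathbf{e}_1$'' argument of \cite[Lemma 17 \& its proof]{jin2019nonconvex}) shows the component of $v_t$ along $\mathbf{e}_1$ dominates and grows like $\gamma^t$ up to constant factors, so that after $\utime = (L/\sqrt{\rho\epsilon})\iota$ steps we have $\norm{v_{\utime}} \ge \tfrac{1}{2}\gamma^{\utime} \eta r_0 \ge \tfrac12(1+\eta\sqrt{\rho\epsilon})^{(L/\sqrt{\rho\epsilon})\iota}\eta r_0$, which by the choice $r_0 > \omega = 2^{2-\iota}L\uspace$ and $(1+\eta\sqrt{\rho\epsilon})^{(L/\sqrt{\rho\epsilon})\iota}\gtrsim 2^{\iota}$ forces $\norm{v_{\utime}} \gg \uspace$. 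Third, this contradicts the localization bound from step one, which forced $\norm{x_{\utime}-x'_{\utime}} = \norm{v_{\utime}} \le \norm{x_{\utime}-\tilde x} + \norm{x'_{\utime}-\tilde x} = O(\uspace)$. The contradiction shows at least one of the two sequences must have decreased $f_\lambda$ by at least $\ufun$, which is the claim.

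The main obstacle I expect is making the difference-sequence recursion rigorous in the nonsmooth setting: unlike the smooth gradient-descent case, the ``Hessian'' here is replaced by the Jacobian $\nabla S$, and one must carefully justify that (a) $S$ is genuinely $C^1$ with Lipschitz derivative on the whole localization ball (not just at $\tilde x$), which relies on the active-manifold structure via Theorem \ref{thm:Davis-3-theorems} and Assumption \ref{assump:LipschitzS}, and (b) the error matrices $\Delta_t$ are controlled uniformly in $t$, which couples back to the localization estimate and requires the parameter choices in \eqref{eq:para} to be tuned so that the ``escape radius'' $\omega$, the decrease $\ufun$, the time horizon $\utime$, and the spectral gap $\eta\sqrt{\rho\epsilon}$ all balance exactly as in \cite{jin2019nonconvex}. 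A secondary technical point is relating $f_\lambda(x_{t+1}) - f_\lambda(x_t)$ to the proximal step size $\norm{x_{t+1}-x_t}$ (done via Lemma \ref{lemma:prox_decrease}) so that ``$v_t$ escaped the ball'' can be converted into ``$f_\lambda$ decreased'', which is where the constants $\theta_1,\theta_2$ enter.
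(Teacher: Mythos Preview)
Your proposal is correct and follows essentially the same argument as the paper: argue by contradiction, use Lemma~\ref{lemma:IOL} to localize both sequences within $\uspace$ of $\tilde x$, write the difference $v_t=x_t-x'_t$ via the mean-value form of $S$ around $\nabla S(\tilde x)$, bound the error matrices using the $\eta\rho$-Lipschitz Jacobian and the localization radius, and show by induction that the $\mathbf{e}_1$-component grows like $(1+\eta\gamma)^t$ until $\|v_{\utime}\|>\uspace$, contradicting localization. The paper carries out the induction by explicitly splitting $v_{t}$ into a dominant part $p(t)=\mathcal{H}^{t}v_0$ and a remainder $q(t)$ and proving $\|q(t)\|\le \|p(t)\|/2$, but this is exactly the ``component along $\mathbf{e}_1$ dominates'' step you describe; your last ``secondary technical point'' about converting escape back into $f_\lambda$-decrease is unnecessary, since the contradiction is already purely geometric with the localization bound.
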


\begin{proof}
We prove it by contradiction. First assume that $\min\{ f_\lambda(x_{\utime}) - f_\lambda(x_{0}), f_\lambda(x'_{\utime}) - f_\lambda(x'_{0}) \} > - \ufun.$ By Lemma \ref{lemma:IOL}, we have
\be \label{eq:local}
\bad
\max\{ \norm{x_t - \tilde{x}},  \norm{x'_t - \tilde{x}} \} &\le \max\{ \norm{x_t - x_0},  \norm{x'_t - x'_0} \} + \{ \norm{x_0 - \tilde{x}},  \norm{x'_0 - \tilde{x}} \}\\
& \le \sqrt{\frac{\theta_1}{\theta_2}\cdot 2\eta \utime \ufun} + \eta r \le \uspace.
\ead
\ee
The last inequality holds due to the choices of the parameters in \eqref{eq:para} and that we can choose $\iota$ to be sufficiently large so that $\eta r$ is small enough compared to the first term. On the other hand, we write the difference $\hat{x}_t := x_t - x'_t$ and update it as:
\be
\bad
\hat{x}_{t+1} &= S(x_t) - S(x'_t) = \mathcal{H}\hat{x}_t + \Delta_t\hat{x}_t\\
& = \underbrace{\mathcal{H}^{t+1}\hat{x}_0}_{p(t+1)}  + \underbrace{\sum_{\tau = 0}^{t}\mathcal{H}^{t-\tau} \Delta_\tau\hat{x}_\tau}_{q(t+1)},
\ead
\ee
where $ \mathcal{H} = \nabla S(\tilde{x})$ and $\Delta_t = \int_0^1 [\nabla S(x'_t + \theta (x_t - x'_t)) - \mathcal{H}]d\theta$. Next we show that $\norm{p(t)}$ is the dominant term by showing
\be\label{eq:q_dorm}
\bad
\norm{q(t)} \le \norm{p(t)} /2,  \qquad t \in [\utime].
\ead
\ee
By induction, $\norm{q(0)} = 0 \le \norm{\hat{x}_0}/2 = \norm{p(0)}/2$. Therefore, \eqref{eq:q_dorm} holds when $t = 0$. We assume it still holds for some $t > 0$. Denote $\lambda_{\max}(\nabla S(\tilde{x})) = 1 + \eta\gamma$, so that $\gamma > \sqrt{\rho\epsilon}$. Since $\hat{x}_0$ lies in the direction of maximum eigenvector of $\nabla S(\tilde{x})$, for any $\tau \le t$, we have:
\be
\bad
\norm{\hat{x}_\tau}  \le \norm{p(\tau)} + \norm{q(\tau)} \le 2\norm{p(\tau)} = 2\norm{\mathcal{H}^{\tau}\hat{x}_0} = 2(1+\eta\gamma)^\tau \eta r_0.
\ead
\ee
By assumption \ref{assump:LipschitzS}, we have
{
\be
\bad
\norm{\Delta_t} &\le \int_0^1 \norm{\nabla S(x'_t + \theta (x_t - x'_t)) - \nabla S(\tilde{x})}d\theta\\
& \le \eta\rho  \int_0^1\norm{\theta(x_t - \tilde{x}) + (1-\theta) (x'_t - \tilde{x})}\\
& \le \eta\rho \max\{\norm{x_t - \tilde{x}}, \norm{x'_t - \tilde{x}} \} \\
& \le \eta\rho \uspace,
\ead
\ee}
hence
\be
\bad
\norm{q(t+1)} & = \norm{\sum_{\tau = 0}^{t}\mathcal{H}^{t-\tau} \Delta_\tau \hat{x}_\tau }
\le \eta\rho\uspace \sum_{\tau = 0}^{t} \norm{\mathcal{H}^{t-\tau}}\norm{\hat{x}_\tau} \\
&\le 2\eta\rho\uspace \sum_{\tau = 0}^{t} (1+\eta\gamma)^t \eta r_0
\le 2\eta\rho\uspace \utime (1+\eta\gamma)^t \eta r_0\\
&\le 2\eta\rho\uspace \utime \norm{p(t+1)},
\ead
\ee
and we have $2\eta\rho\uspace \utime \le 1/2$, which completes the proof.\\
Finally, we have
\be
\bad
\max\{\norm{x_t - x_0},  \norm{x'_t - x'_0} \} &\ge \frac{1}{2}\norm{\hat{x}(\utime)} \ge \frac{1}{2}[\norm{p(\utime)} - \norm{q(\utime)}] \ge \frac{1}{4} \norm{p(\utime)}\\
& = \frac{(1 + \eta\gamma)^{\utime}\eta r_0}{4} \ge 2^{\iota - 2}\eta r_0 > \uspace,
\ead
\ee
which contradicts with \eqref{eq:local}. So we complete the proof.
\end{proof}

Now we prove Lemma \ref{lemma:ESP}.
\begin{proof}
We define the stuck region on the ball $\BB_{\tilde{x}}(\eta r)$ as
\be
\bad
\mathcal{X}_{\text{stuck}} := \{x \in \BB_{\tilde{x}}(\eta r) | \{x_t\} \text{is the iterate sequence with } x_0 = x, \text{and }f_\lambda(x_{\utime}) - f_\lambda(\tilde{x}) > -\ufun \},
\ead
\ee
and show that this region is a small part of the surface of the ball. According to Lemma \ref{lemma:CS} we know that the width of $\mathcal{X}_{\text{stuck}}$ along the $\textbf{e}_1$ direction is at most $\eta \omega$. Therefore,
\be
\bad
\Pr(x_0 \in \mathcal{X}_{\text{stuck}}) &= \frac{\text{Vol}( \mathcal{X}_{\text{stuck}})}{\text{Vol}(\BB^{d}_{\tilde{x}}(\eta r))}
\le \frac{\eta \omega \times \text{Vol}(\BB^{d-1}_0(\eta r))}{\text{Vol} (\BB^{d}_0(\eta r))}\\
&= \frac{\omega}{r\sqrt{\pi}}\frac{\Gamma(\frac{d}{2}+1)}{\Gamma(\frac{d}{2}+\frac{1}{2})}
\le \frac{\omega}{r} \cdot \sqrt{\frac{d}{\pi}} \le  \frac{L\sqrt{d}}{\sqrt{\rho\epsilon}} \cdot \iota^2 2^{8-\iota}.
\ead
\ee
From \eqref{eq:para} we have $\lambda^{-1} > \frac{\beta + \mu}{1 - \lambda(\beta + \mu)}$ and $L > \lambda^{-1}$. By Lemma \ref{lem_prop_moreau}, $f_\lambda$ is $L$-smooth. For those $\{x_0 \notin \mathcal{X}_{\text{stuck}}\}$, since $f_\lambda$ is $L$-smooth, we have:
 \be
\bad
f_\lambda(x_\utime) - f_\lambda(\tilde{x}) = [f_\lambda(x_\utime) - f_\lambda(x_0)] + [f_\lambda(x_0)- f_\lambda(\tilde{x})] \le -\ufun + \epsilon \eta r + \frac{L \eta^2 r^2}{2} \le -\ufun / 2,
\ead
\ee
where in the last inequality, we choose $\iota$ to be large enough so that the third term can be omitted. This completes the proof.
\end{proof}

\end{document}